\definecolor{mygray}{gray}{.9}
\newtheorem{theorem}{Theorem}[section]
\newtheorem{lemma}[theorem]{Lemma}
\newcommand{\Exp}{\mathrm{\mathbb{E}}}
\newcommand{\Real}{\mathbb{R}}
\newcommand{\PS}{\mathcal{P}}
\newcommand{\PSe}[2]{\widehat{\mathcal{P}}_{{#1}}^{#2}}
\newcommand{\Pbal}{\mathcal{P}^\text{bal}}
\newcommand{\PT}{\mathcal{P}_{t}}
\newcommand{\PTe}{\widehat{\mathcal{P}}_{t}}
\newcommand{\CS}{C}
\newcommand{\poly}{\textup{poly}}
\newcommand{\DSe}{\widehat{\mathcal{D}}_s}
\def\shownotes{1}  
\newcommand{\authnote}[2]{[#1: #2]}
\newcommand{\authnote}[2]{}
\title{Self-supervised Learning is More Robust to Dataset Imbalance}
\author{Hong Liu \thanks{Stanford University, email: \texttt{hliu99@stanford.edu}} \and Jeff Z. HaoChen \thanks{Stanford University, email: \texttt{jhaochen@stanford.edu}}  \and Adrien Gaidon \thanks{Toyota Research Institute, email: \texttt{adrien.gaidon@tri.global}} \and Tengyu Ma \thanks{Stanford University, email: \texttt{tengyuma@stanford.edu}}}
\begin{document}
\maketitle

\begin{abstract}
Self-supervised learning (SSL) is a scalable way to learn general visual representations since it learns without labels. However, large-scale unlabeled datasets in the wild often have long-tailed label distributions, where we know little about the behavior of SSL. In this work, we systematically investigate self-supervised learning under dataset imbalance. First, we find out via extensive experiments that off-the-shelf self-supervised representations are already more robust to class imbalance than supervised representations. The performance gap between balanced and imbalanced pre-training with SSL is significantly smaller than the gap with supervised learning, across sample sizes, for both in-domain and, especially, out-of-domain evaluation. Second, towards understanding the robustness of SSL, we hypothesize that SSL learns richer features from frequent data: it may learn label-irrelevant-but-transferable features that help classify the rare classes and downstream tasks. In contrast, supervised learning has no incentive to learn features irrelevant to the labels from frequent examples. We validate this hypothesis with semi-synthetic experiments and theoretical analyses on a simplified setting. Third, inspired by the theoretical insights, we devise a re-weighted regularization technique that  consistently improves the SSL representation quality on imbalanced datasets with several evaluation criteria, closing the small gap between balanced and imbalanced datasets with the same number of examples.  
\end{abstract}

\section{Introduction}
Self-supervised learning (SSL) is an important paradigm of machine learning, because it can leverage the availability of large-scale unlabeled datasets to learn representations for a wide range of downstream tasks and datasets~\citep{he2020momentum, chen2020simple, grill2020bootstrap,caron2020unsupervised, chen2021exploring}. Current SSL algorithms are mostly trained on curated, balanced datasets, but large-scale unlabeled datasets in the wild are inevitably imbalanced with a long-tailed label distribution~\citep{reed2001pareto,liu2019large}. Curating a class-balanced unlabeled dataset requires the knowledge of labels, which defeats the purpose of leveraging unlabeled data by SSL.

The behavior of SSL algorithms under dataset imbalance remains largely underexplored in the literature, but extensive studies do not bode well for supervised learning (SL) with imbalanced datasets. The performance of vanilla supervised methods degrades significantly on class-imbalanced datasets~\citep{cui2019class,cao2019learning, buda2018systematic}, posing challenges to practical applications such as instance segmentation~\citep{tang2020long} and depth estimation~\citep{yang2021delving}. Many recent works address this issue with various regularization and re-weighting/re-sampling techniques~\citep{ando2017deep,wang2017learning,jamal2020rethinking,cui2019class,cao2019learning,cao2021heteroskedastic, tian2020posterior, hong2021disentangling,wang2021longtailed}.

In this work, we systematically investigate the representation quality of SSL algorithms under class imbalance. Perhaps surprisingly, we find out that off-the-shelf SSL representations are already more robust to dataset imbalance than the representations learned by supervised pre-training. 
We evaluate the representation quality by linear probe on in-domain (ID) data and finetuning on out-of-domain (OOD) data. We compare the robustness of SL and SSL representations by computing the gap between the performance of the representations pre-trained on balanced and imbalanced datasets of the same sizes. 
We observe that the balance-imbalance gap for SSL is much smaller than SL, under a variety of configurations with varying dataset sizes and imbalance ratios and with both ID and OOD evaluations (see Figure~\ref{fig0} and Section~\ref{sec2} for more details). This robustness holds even with the same number of samples for SL and SSL, although SSL does not require labels and hence can be more easily applied to larger datasets than SL.

\begin{figure}[t]
  \centering
   \subfigure[In Domain (ID).]{
    \includegraphics[width=0.48\textwidth]{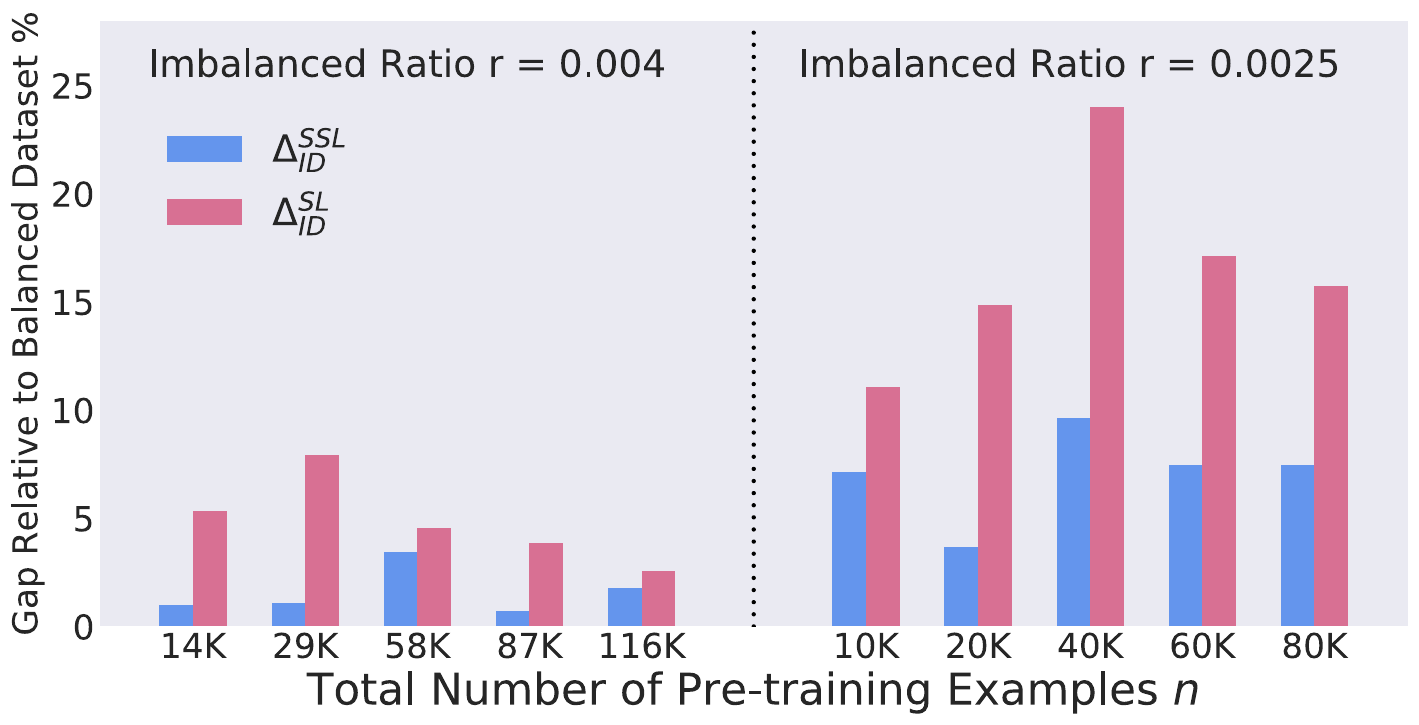} 
    }
    \hfill
   \subfigure[Out of Domain (OOD).]{
    \includegraphics[width=0.474\textwidth]{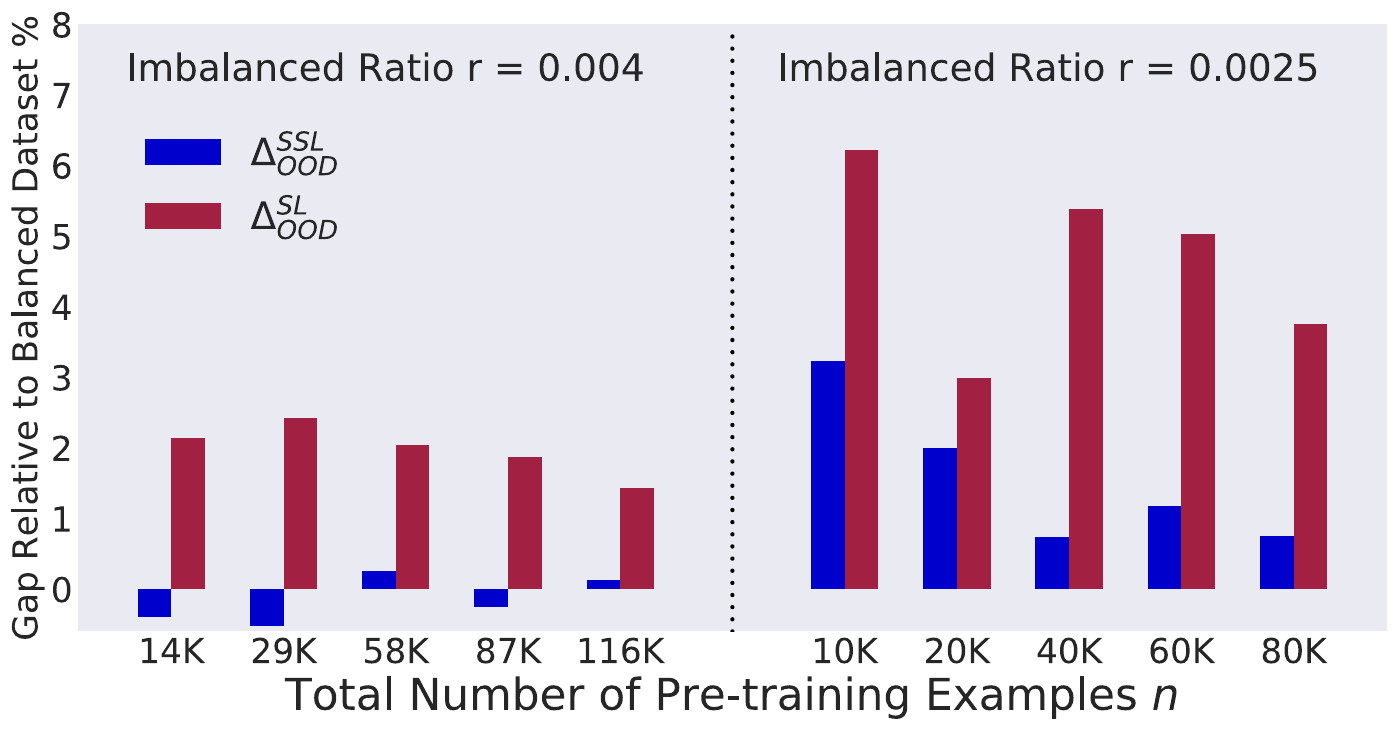}
    }
    \vspace{-10pt}
  \caption{\textbf{Relative performance gap} (lower is better) between imbalanced and balanced representation learning. The gap is much smaller for self-supervised (MoCo v2) representations ($\Delta^\text{SSL}$ in blue) vs. supervised ones ($\Delta^\text{SL}$ in red) on long-tailed ImageNet with various number of examples $n$, across both ID (a) and OOD (b) evaluations. See Equation~\eqref{eqn:relative} for the precise definition of the relative performance gap and and Figure~\ref{fig3} for the absolute performance. } 
  \label{fig0}
  \vspace{-5pt}
\end{figure}

Why is SSL more robust to dataset imbalance? We identify the following underlying cause to answer this fundamental question: SSL learns richer features from the frequent classes than SL does. These features may help classify the rare classes under ID evaluation and are transferable to the downstream tasks under OOD evaluation. For simplicity, consider the situation where rare classes have so limited data that both SL and SSL models overfit to the rare data. In this case, it is important for the models to learn diverse features from the frequent classes which can help classify the rare classes. Supervised learning is only incentivized to learn those features relevant to predicting frequent classes and may ignore other features. In contrast, SSL may learn the structures within the frequent classes better---because it is not supervised or incentivized by any labels, it can learn not only the label-relevant features but also other interesting features capturing the intrinsic properties of the input distribution, which may generalize/transfer better to rare classes and downstream tasks. 

We empirically validate this intuition by visualizing the features on a semi-synthetic dataset where the label-relevant features and label-irrelevant-but-transferable features are prominently seen by design (cf.~Section~\ref{sec:semi}). In addition, we construct a toy example where we can rigorously prove the difference between self-supervised and supervised features in Section~\ref{sec:toy}. 

Finally, given our theoretical insights, we take a step towards further improving SSL algorithms, closing the small gap between SSL on balanced and imbalanced datasets. We identify the generalization gap between the empirical and population pre-training losses on rare data as the key to improvements. 

To this end, we design a simple algorithm that first roughly estimates the density of examples with kernel density estimation and then applies a larger sharpness-based regularization~\citep{foret2020sharpness} to the estimated rare examples. Our algorithm consistently improves the representation quality under several evaluation protocols.

We sum up our contributions as follows. (1) We are the first to systematically investigate the robustness of self-supervised representation learning to dataset imbalance. (2) We propose and validate an explanation of this robustness of SSL, empirically and theoretically. (3) We propose a principled method to improve SSL under unknown dataset imbalance. 

\section{Exploring the Effect of Class Imbalance on SSL}\label{sec2}
Dataset class imbalance can pose challenge to self-supervised learning in the wild. Without access to labels, we cannot know in advance whether a large-scale unlabeled dataset is imbalanced. Hence, we need to study how SSL will behave under dataset imbalance to deploy SSL in the wild safely. In this section, we systematically investigate the effect of class imbalance on self-supervised representations with experiments.

\subsection{Problem Formulation}\label{formulation}

\textbf{Class-imbalanced pre-training datasets.} We assume the datapoints / inputs are in $\Real^d$ and come from $\CS$ underlying classes. Let $x$ denote the input and $y$ denote the corresponding label. Supervised pre-training algorithms have access to the inputs and corresponding labels, whereas self-supervised pre-training only observes the inputs. 
Given a pre-training distribution $\PS$ over over $\Real^d\times [\CS]$, let $r$ denote the ratio of class imbalance. That is, $r$ is the ratio between the probability of the rarest class and the most frequent class: $r = \frac{\mathop{\min}_{j \in [\CS]} {\PS(y = j)}}{\mathop{\max}_{j \in [\CS]} {\PS(y = j)}}\le 1$. We will construct distributions with varying imbalance ratios and use $\PS^r$ to denote the distribution with ratio $r$. We also use $\Pbal$ for the case where $r = 1$, i.e. the dataset is balanced. Large-scale data in the wild often follow heavily long-tailed label distributions where $r$ is small. We assume that for any class $j\in[C]$, the class-conditional distribution $\PS^r(x | y = j)$ is the same across balanced and imbalanced datasets for all $r$. The pre-training dataset $\PSe{n}{r}$ consists of $n$ i.i.d. samples from $\PS^r$.

\textbf{Pre-trained models.} A feature extractor is a function $f_{\phi}:\Real^d\rightarrow\Real^m$ parameterized by neural network parameters $\phi$, which maps inputs to representations. A linear head is a linear function $g_{\theta}:\Real^m\rightarrow\Real^\CS$, which can be composed with $f_\phi$ to produce the predictions. SSL algorithms learn $\phi$ from unlabeled data. Supervised pre-training learns the feature extractor and the linear head from labeled data. We drop the head and only evaluate the quality of feature extractor $\phi$.\footnote{It is well-known that the composition of the head and features learned from supervised learning is more sensitive to imbalanced dataset than feature extractor $\phi$ itself~\citep{cao2019learning, kang2019decoupling}. Please also see Table~\ref{imagenetsup} in Appendix~\ref{implementation4} for a comparison between CRT~\citep{kang2019decoupling} and Supervised.} 

Following the standard evaluation protocol in prior works~\citep{he2020momentum,chen2020simple}, we measure the quality of learned representations on both \textit{in-domain} and \textit{out-of-domain} datasets with either \textit{linear probe} or \textit{fine-tuning}, as detailed below. 

\textbf{In-domain (ID) evaluation} tests the performance of representations on the \textit{balanced in-domain} distribution $\Pbal$ with \textbf{linear probe}. Given a feature extractor $f_\phi$ pre-trained on a pre-training dataset $\PSe{n}{r}$ with $n$ data points and imbalance ratio $r$, we train a $\CS$-way linear classifier $\theta$ on top of $f_\phi$ on a \textit{balanced} dataset\footnote{We essentially use the largest balanced labeled ID dataset for this evaluation, which oftentimes means the entire curated training dataset, such as CIFAR-10 with 50,000 examples and ImageNet with 1,281,167 examples.} sampled i.i.d. from $\Pbal$. We evaluate the representation quality with the top-1 accuracy of the learned linear head on $\Pbal$. We denote the ID accuracy of supervised pre-trained representations by $A^\textup{SL}_{\textup{ID}}(n, r)$. Note that $A^\textup{SL}_{\textup{ID}}(n, 1)$ stands for the result with balanced pre-training dataset. For SSL representations, we denote the accuracy by $A^\textup{SSL}_{\textup{ID}}(n, r)$.

\textbf{Out-of-domain (OOD) evaluation} tests the performance of representations by \textit{fine-tuning} the feature extractor and the head on a (or multiple) \textit{downstream} target distribution $\PT$. 
Starting from a feature extractor $f_\phi$ (pre-trained on a dataset of size $n$ and imbalance ratio $r$) and a randomly initialized classifier $\theta$, we fine-tune $\phi$ and $\theta$ on the target dataset $\PTe$, and evaluate the representation quality by the expected top-1 accuracy on $\PT$. We use $A^\textup{SL}_{\textup{OOD}}(n, r)$ and $A^\textup{SSL}_{\textup{OOD}}(n, r)$ to denote the resulting accuracies of supervised and self-supervised representations, respectively.

\textbf{Summary of varying factors.} We aim to study the effect of class imbalance to feature qualities on a diverse set of configurations with the following varying factors: (1) the number of examples in pre-training $n$, (2) the imbalance ratio of the pre-training dataset $r$, (3) ID or OOD evaluation, and (4) self-supervised learning algorithms: MoCo v2~\citep{he2020momentum}, or SimSiam~\citep{chen2021exploring}.

\subsection{Experimental Setup}\label{sec:setup}
	
\textbf{Datasets.} We pre-train the representations on variants of ImageNet~\citep{russakovsky2015imagenet} or CIFAR-10~\citep{krizhevsky2009learning} with a wide range of numbers of examples and ratios of imbalance. Following~\citet{liu2019large}, we consider exponential and Pareto distributions, which closely simulate the natural long-tailed distributions. We consider imbalance ratio in $\{1, 0.004, 0.0025\}$ for ImageNet and $\{1, 0.1, 0.01\}$ for CIFAR-10. For each imbalance ratio, we further downsample the dataset with a sampling ratio in $\{0.75,0.5,0.25,0.125\}$ to form datasets with varying sizes. Note that we fix the variant of the dataset when comparing different algorithms. For ID evaluation, we use the original CIFAR-10 or ImageNet training set for the training phase of linear probe and use the original validation set for the final evaluation. For OOD evaluation of representations learned on CIFAR-10, we use STL-10~\citep{coates2011analysis} as the target /downstream dataset. For OOD evaluation of representations learned on ImageNet, we fine-tune the pre-trained feature extractors on CUB-200~\citep{wah2011caltech}, Stanford Cars~\citep{krause20133d}, Oxford Pets~\citep{parkhi2012cats}, and Aircrafts~\citep{maji2013fine}, and measure the representation quality with average accuracy on the downstream tasks.

\textbf{Models.} We use ResNet-18 on CIFAR-10 and ResNet-50 on ImageNet as backbones. For supervised pre-training, we follow the standard protocol of~\citet{he2016deep} and~\citet{kang2019decoupling}. For self-supervised pre-training, we consider MoCo v2~\citep{he2020momentum} and SimSiam~\citep{chen2021exploring}. We run each evaluation experiment with $3$ seeds and report the average and standard deviation in the figures. Further implementation details and additional results are deferred to Section~\ref{details2}.

\begin{figure}[t]
  \centering
   \subfigure[CIFAR-10, ID]{
    \includegraphics[width=0.45\textwidth]{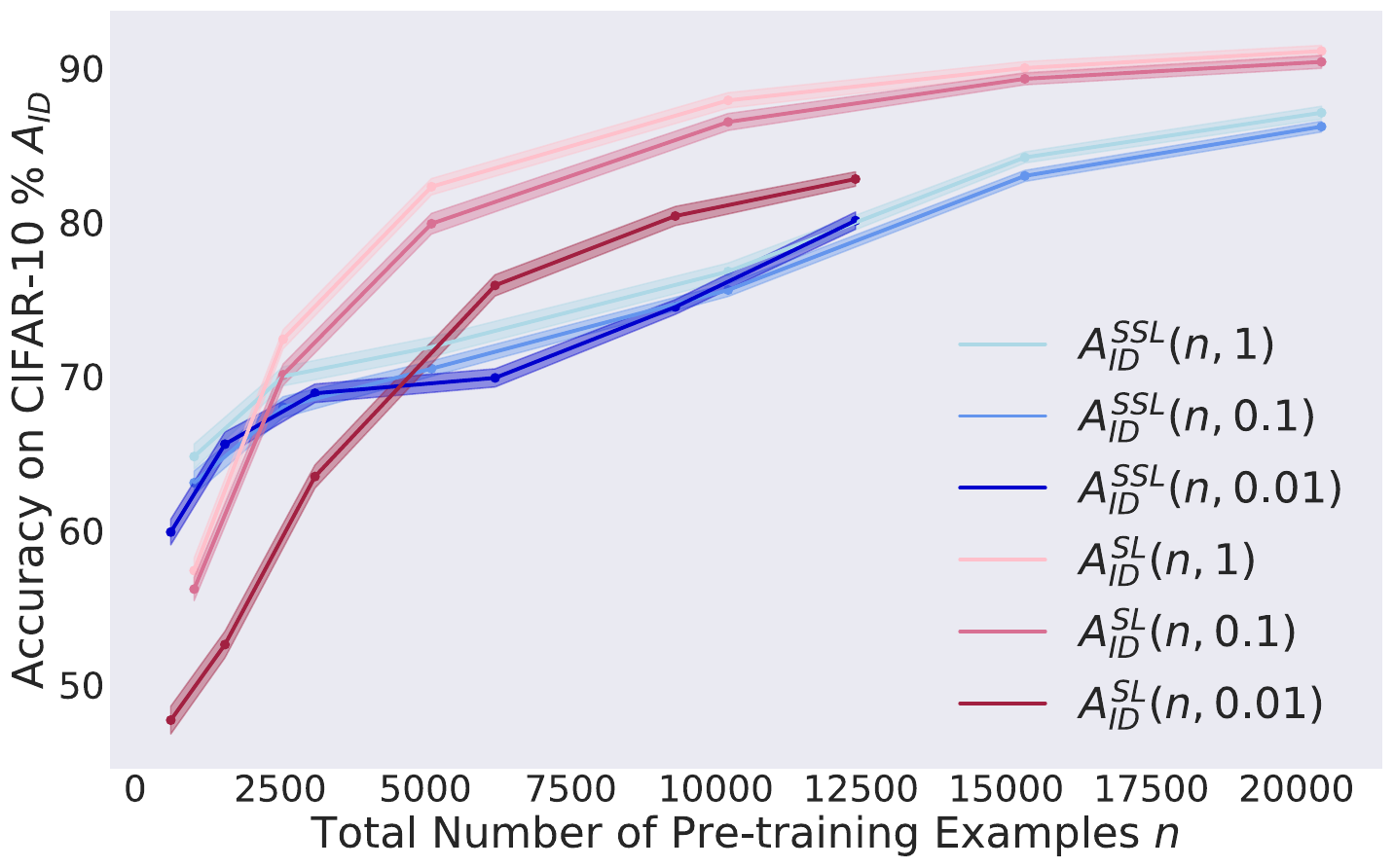} 
    }
        \vspace{-10pt}
    \hfill
   \subfigure[ImageNet, ID]{
    \includegraphics[width=0.456\textwidth]{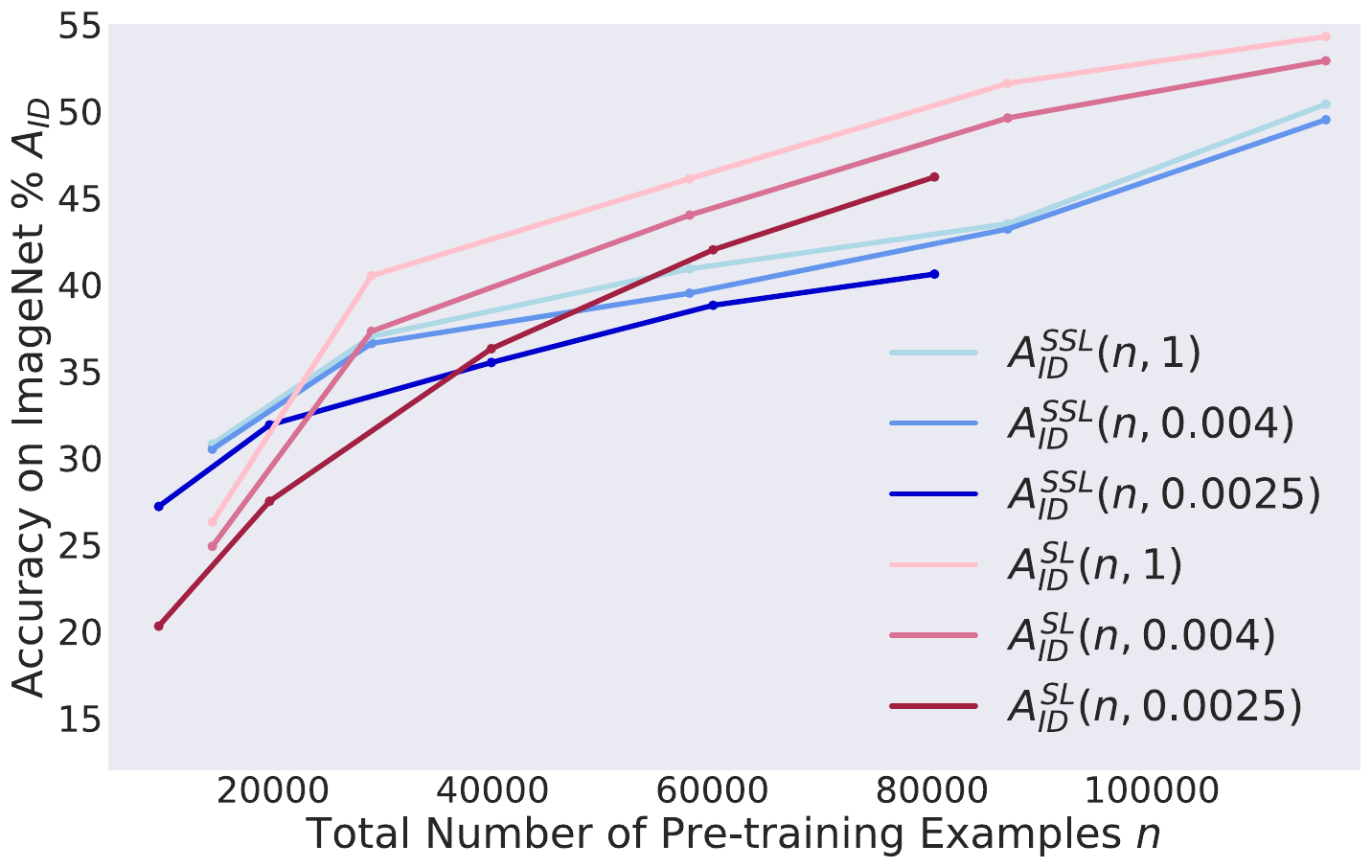}
    }
    \subfigure[CIFAR-10, OOD]{
    \includegraphics[width=0.45\textwidth]{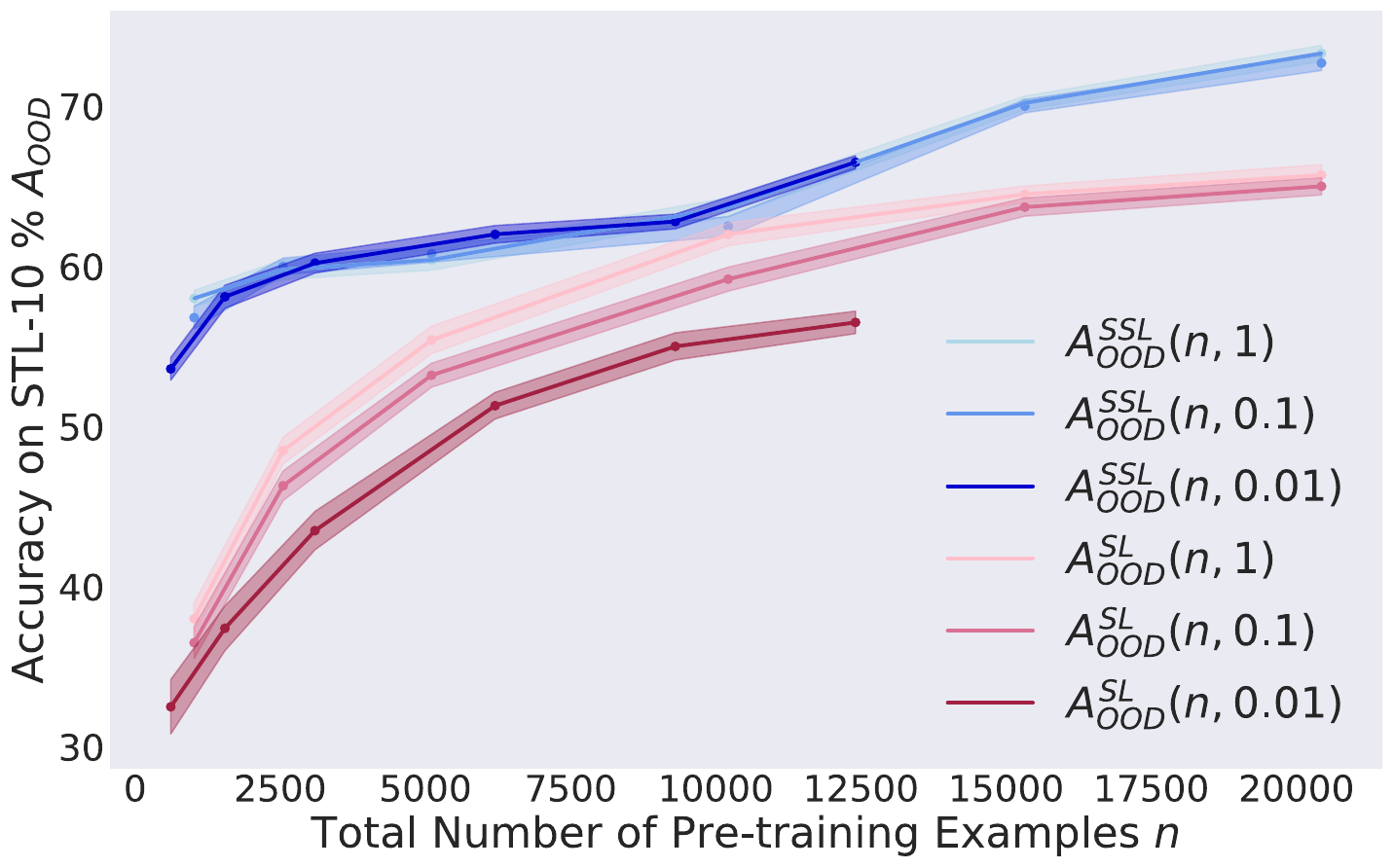} 
    }
    \hfill
   \subfigure[ImageNet, OOD]{\vspace{-20pt}
    \includegraphics[width=0.45\textwidth]{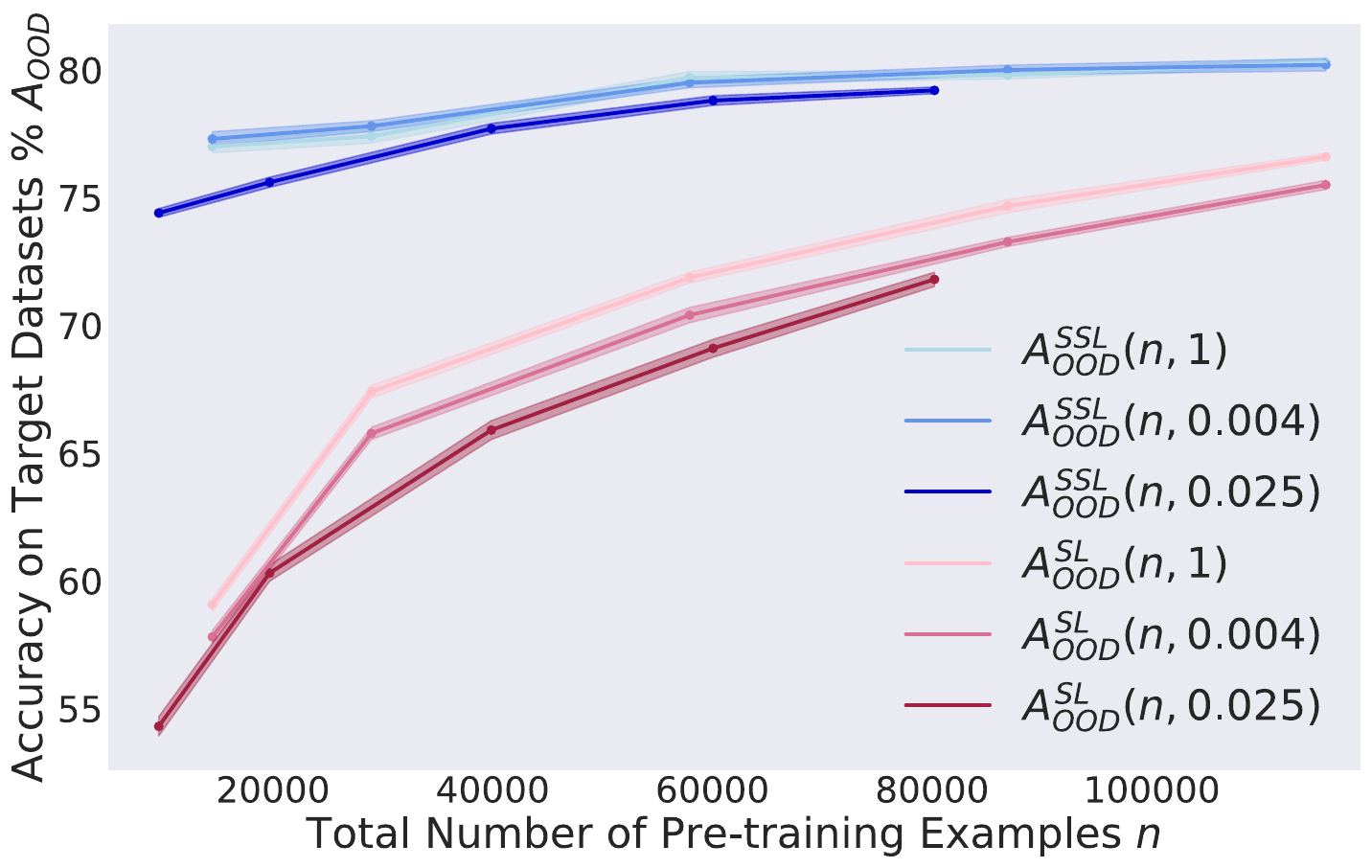}
    }
    \vspace{-10pt}
  \caption{\textbf{Representation quality on balanced and imbalanced datasets.} Left: CIFAR-10, SL vs. SSL (SimSiam); Right: ImageNet, SL vs. SSL (MoCo v2). For both ID and OOD, the gap between balanced and imbalanced datasets with the same $n$ is larger for supervised learning. The accuracy of supervised representations is better with reasonably large $n$ in ID evaluation, while self-supervised representations perform better in OOD evaluation.\protect\footnotemark
  } 
  \vspace{-5pt}
  \label{fig3}
\end{figure}
\footnotetext{The maximum $n$ is smaller for extreme imbalance. The standard deviation comes only from the randomness of evaluation. We do not include the stddev for ImageNet ID due to limitation of computation resources.}

\subsection{Results: Self-supervised Learning is More Robust  than Supervised Learning to Dataset Imbalance} \label{subsec:results}

In Figure~\ref{fig3}, we plot the results of ID and OOD evaluations, respectively. For both ID and OOD evaluations, the gap between SSL representations learned on balanced and imbalanced datasets with the same number of pre-training examples, i.e., $A^\textup{SSL}(n, 1)- A^\textup{SSL}(n, r)$, is smaller than the gap of supervised representations, i.e., $A^\textup{SL}(n, 1)- A^\textup{SL}(n, r)$, consistently in all configurations. Furthermore, we compute the relative accuracy gap to balanced dataset $\Delta^\text{SSL}(n,r) \triangleq (A^\textup{SSL}(n, 1)- A^\textup{SSL}(n, r))/ {A^\textup{SSL}(n, 1)}$ in Figure~\ref{fig0}. We observe that with the same number of pre-training examples, the relative gap of SSL representations between balanced and imbalanced datasets is smaller than that of SL representations across the board, 
\begin{align}\label{eqn:relative}
    \Delta^\text{SSL}(n,r) \triangleq \frac{A^\textup{SSL}(n, 1)- A^\textup{SSL}(n, r)} {A^\textup{SSL}(n, 1)} \ll \Delta^\text{SL}(n,r) \triangleq \frac{A^\textup{SL}(n, 1)- A^\textup{SL}(n, r)}{A^\textup{SL}(n, 1)}.
\end{align}
Also note that comparing the robustness with the same number of data is actually in favor of SL, because SSL is more easily applied to larger datasets without the need of collecting labels.  

\textbf{ID vs. OOD.} As shown in Figure~\ref{fig3}, we observe that representations from supervised pre-training perform better than self-supervised pre-training in ID evaluation with reasonably large $n$, while self-supervised pre-training is better in OOD evaluation. This phenomenon is orthogonal to our observation that SSL is more robust to dataset imbalance, and is consistent with recent works (e.g., \citet{chen2020simple,he2020momentum}) which also observed that SSL performs slightly worse than supervised learning on balanced ID evaluation but better on OOD tasks.

\section{Analysis}\label{sec:analysis}
We have found out with extensive experiments that self-supervised representations are more robust to class imbalance than supervised representations. A natural and fundamental question arises: where does the robustness stem from? In this section, we propose a possible reason and justify it with theoretical and empirical analyses.

\textbf{SSL learns richer features from frequent data that are transferable to rare data.} The rare classes of the imbalanced dataset can contain only a few examples, making it hard to learn proper features to classify the rare classes. In this case, one may want to resort to the features learned from the frequent classes for help. However, due to the supervised nature of classification tasks, the supervised model mainly learns the features that help classify the frequent classes and may neglect other features which can transfer to the rare classes and potentially the downstream tasks. Partly because of this, \citet{jamal2020rethinking} explicitly encourage the model to learn features transferable from the frequent to the rare classes with meta-learning. In contrast, in self-supervised learning, without the bias or incentive from the labels, the models can learn richer features that capture the intrinsic structures of the inputs---both features useful for classifying the frequent classes and features transferable to the rare classes. 

\subsection{Rigorous Analysis on A Toy Setting}\label{sec:toy}

To justify the above conjecture, we instantiate supervised and self-supervised learning in a setting where the features helpful to classify the frequent classes and features transferable to the rare classes can be clearly separated. In this case, we prove that self-supervised learning learns better features than supervised learning. 

\begin{figure}
  \begin{center}
    \includegraphics[width=0.7\textwidth]{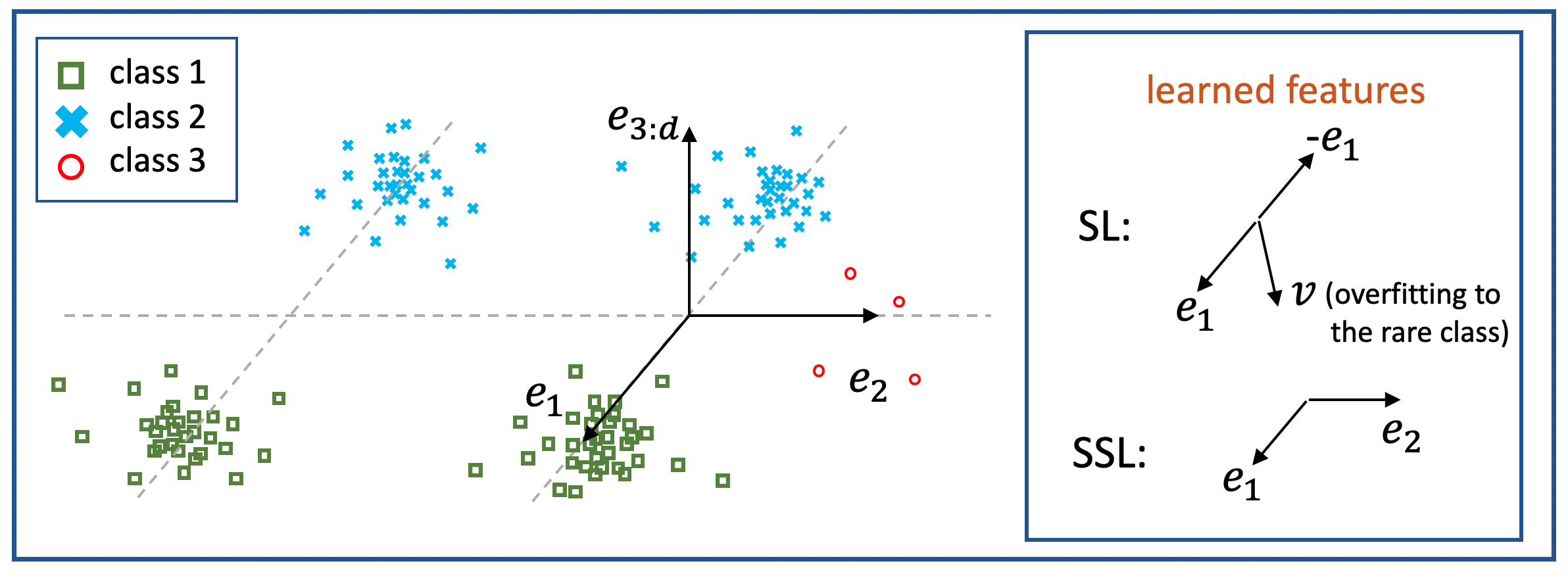}
  \end{center}
  \vspace{-10pt}
  \caption{\textbf{Explaining SSL's robustness in a toy setting.} $e_1$ and $e_2$ are two orthogonal directions in the $d$-dimensional Euclidean space that decides the labels, and $e_{3:d}$ represents the other $d-2$ dimensions. Classes 1 and 2 are frequent classes and the third class is rare. To classify the three classes, the representations need to contain both $e_1$ and $e_2$ directions. Supervised learning learns direction $e_1$ from the frequent classes (which is necessary and sufficient to identify classes 1 and 2) and some overfitting direction $v$ from the rare class which has insufficient data.
  Note that $v$ might be mostly in the $e_{3:d}$ directions due to overfitting.  In contrast, SSL learns both $e_1$ and $e_2$ directions from the frequent classes because they capture the intrinsic structures of the inputs (e.g., $e_1$ and $e_2$ are the directions with the largest variances), even though $e_2$ does not help distinguish the frequent classes. The direction $e_2$ learned from frequent data by SSL can help classify the rare class.}
  \label{fig:setting}
  \vspace{-10pt}
\end{figure}

    \textbf{Data distribution.} Let $e_1, e_2$ be two orthogonal unit-norm vectors in the $d$-dimensional Euclidean space.  Consider the following pre-training distribution $\PS$ of a 3-way classification problem, where the class label $y\in[3]$. The input $x$ is generated as follows. Let $\tau>0$ and $\rho>0$ be hyperparameters of the distribution. First sample $q$ uniformly from $\{0, 1\}$ and $\xi\sim\mathcal{N}(0, I)$ from Gaussian distribution. For the first class ($y=1$), set $x=e_1-q\tau e_2 + \rho\xi$. For the second class ($y=2$), set $x =-e_1-q\tau e_2 + \rho\xi$. For the third class ($y=3$), set  $x=e_2 + \rho\xi$. Classes 1 and 2 are frequent classes, while class 3 is the rare class, i.e., $\frac{\PS(y = 3)}{\PS(y = 1)}, \frac{\PS(y = 3)}{\PS(y = 2)} = o(1)$. See Figure~\ref{fig:setting} for an illustration of this data distribution. In this case, both $e_1$ and $e_2$ are features from the frequent classes 1 and 2. However, only $e_1$ helps classify the frequent classes and only $e_2$ can be transferred to the rare classes.

\textbf{Algorithm formulations.} 
For supervised learning, we train a two-layer linear network $f_{W_1, W_2}(x) \triangleq W_2W_1x$ with weight matrices $W_1\in\Real^{m\times d}$ and $W_2\in\Real^{3\times m}$ for some $m\ge 3$, and then use the first layer $W_{\textup{SL}} = W_1$ as the feature for downstream tasks.
Given a linearly separable labeled dataset, we learn such a network with minimal norm $\|W_1^\top W_1\|_F^2+\|W_2^\top W_2\|_F^2$ subject to the margin constraint $f_{W_1, W_2}(x)_y\ge f_{W_1, W_2}(x)_{y'}+1$ for all data $(x,y)$ in the dataset and $y'\ne y$.\footnote{Previous work shows that deep linear networks trained with gradient descent using logistic loss converge to this min norm solution in direction~\citep{ji2018gradient}.}
For self-supervised learned, similar to SimSiam~\citep{chen2020simple}, we construct positive pairs $(x+\xi, x+\xi')$ where $x$ is from the empirical dataset, $\xi$ and $\xi'$ are independent random perturbations. We learn a matrix $W_{\textup{SSL}}\in\Real^{m\times d}$ which minimizes $-\hat{\Exp}[(W(x+\xi))^T(W(x+\xi'))] + \frac{1}{2}\|W^\top W\|_F^2$, where the expectation $\hat{\Exp}$ is over the empirical dataset and the randomness of $\xi$ and $\xi'$. The regularization term $\frac{1}{2}\|W^\top W\|_F^2$ is introduced only to make the learned features more mathematically tractable. We use $W_{\textup{SSL}}x$ as the feature of data $x$ in the downstream task.

\textbf{Main intuitions.} We compare the features learned by SSL and supervised learning on an imbalanced dataset that contains an abundant (poly in $d$) number of data from the frequent classes but only a small (sublinear in $d$) number of data from the rare class. The key intuition behind our analysis is that supervised learning learns only the $e_1$ direction (which helps classify class 1 vs. class 2) and some random direction that overfits to the rare class. In contrast, self-supervised learning learns both $e_1$ and $e_2$ directions from the frequent classes. Since how well the feature helps classify the rare class (in ID evaluation) depends on how much it correlates with the $e_2$ direction, SSL provably learns features that help classify the rare class, while supervised learning fails. This intuition is formalized by the following theorem.

\begin{theorem}\label{thm:toy_example}
	Let $n_1, n_2, n_3$ be the number of data from the three classes respectively. Let $\rho=d^{-\frac{1}{5}}$ and $\tau = d^{\frac{1}{5}}$ in the data generative model. For $n_1, n_2  = \Theta(\poly(d))$ and $n_3\le d^{\frac{1}{5}}$, with probability at least $1-O(e^{-d^{\frac{1}{10}}})$, the following statements hold for any feature dimension $m\ge 3$: 
	\begin{itemize}[wide=0.5em, leftmargin =*, nosep, before = \leavevmode\vspace{-0.0\baselineskip}]
		\item{Let $W_{\textup{SL}} = [w_1, w_2,\cdots, w_m]^\top$ be the feature learned by SL, then $\sum_{i=1}^m \langle e_2, w_i\rangle^2\le O(d^{-\frac{1}{10}})$.}
		\item{Let $W_{\textup{SSL}} = [\tilde{w}_1, \tilde{w}_2, \cdots, \tilde{w}_m]^\top$ be the feature learned by SSL, then $\|\Pi e_2\|_2 \ge 1-O(d^{-\frac{1}{5}})$, where $\Pi$ projects $e_2$ onto the row span of $W_{\textup{SSL}}$.}
	\end{itemize}
\end{theorem}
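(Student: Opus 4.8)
The plan is to prove the two bullets separately, starting with the self-supervised one, which is the cleaner. First I would put the SSL objective in closed form. Because the augmentation perturbations $\xi,\xi'$ are independent and mean-zero, $\hat{\Exp}[(W(x+\xi))^\top(W(x+\xi'))] = \mathrm{tr}(W\widehat{\Sigma}W^\top)$, where $\widehat{\Sigma}\triangleq\frac1n\sum_{j=1}^n x_jx_j^\top$ is the empirical (uncentered) second-moment matrix of the imbalanced dataset ($n=n_1+n_2+n_3$), and $\|W^\top W\|_F^2 = \mathrm{tr}((WW^\top)^2)$. Writing $W^\top W=\sum_{i=1}^2\sigma_i^2 v_iv_i^\top$ in terms of the right singular vectors and values of $W$, the objective decouples as $\sum_i\bigl(-\sigma_i^2\,v_i^\top\widehat{\Sigma}v_i+\tfrac12\sigma_i^4\bigr)$; optimizing the $\sigma_i$ forces $\sigma_i^2=v_i^\top\widehat{\Sigma}v_i$, leaving the problem of maximizing $\sum_i(v_i^\top\widehat{\Sigma}v_i)^2$ over orthonormal pairs $\{v_1,v_2\}$, whose maximizer (a short Cauchy--Schwarz argument) is the top-$2$ eigenspace of $\widehat{\Sigma}$ as long as its second and third eigenvalues differ. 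Thus $\mathrm{span}(\tilde w_1,\tilde w_2)$ is exactly the top-$2$ eigenspace of $\widehat{\Sigma}$, and it suffices to show this eigenspace makes principal angle $\theta$ with $\mathrm{span}(e_1,e_2)$ with $\sin^2\theta=O(d^{-1/5})$, since then $\|\Pi e_2\|_2\ge\cos\theta\ge 1-\sin^2\theta$.

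To control $\widehat{\Sigma}$ I would compute $\Exp[\widehat{\Sigma}]$ over the data generation. It is block-diagonal with respect to $\mathrm{span}(e_1,e_2)$ versus its orthogonal complement: the $2\times2$ block is $\frac{n_1+n_2}{n}e_1e_1^\top+\bigl(\tfrac{(n_1+n_2)\tau^2}{2n}+\tfrac{n_3}{n}\bigr)e_2e_2^\top+\tfrac{(n_2-n_1)\tau}{2n}(e_1e_2^\top+e_2e_1^\top)+\rho^2 I$, with eigenvalues $\Theta(\tau^2)=\Theta(d^{2/5})$ and $\Theta(1)$ (the smaller bounded below by a constant, using $|n_2-n_1|\le n$ and $n_3\ll n_1+n_2$), while the complementary block is $\rho^2 I$, with operator norm $O(\rho^2\max(1,d/n))=o(1)$ once sampling is accounted for. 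Then the relevant concentration — Gaussian norm bounds, a Gaussian-random-matrix bound for the $\Theta(\poly(d))\times d$ noise matrix, and matrix Bernstein for the $2\times2$ block — gives, with probability $1-O(e^{-d^{1/10}})$, that $\widehat{\Sigma}$ matches $\Exp[\widehat{\Sigma}]$ up to $o(1)$ in operator norm on each diagonal block, while its off-diagonal (``cross'') block has operator norm $O(\rho\tau\sqrt{d/n})=O(\sqrt{d/n})$ (dominated by the contribution $\approx\tfrac1n\sum_{j:q_j=1}(-\tau e_2)(\rho\xi_j)^\top$ of the $\sim n/2$ frequent points with $q_j=1$). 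Hence $\widehat{\Sigma}$ has a $\Theta(1)$ gap between its $2$nd and $3$rd eigenvalues, and the Davis--Kahan $\sin\theta$ theorem yields $\sin\theta=O(\sqrt{d/n})$; when $n_1,n_2$ are a sufficiently large polynomial in $d$ this is $o(d^{-1/10})$, so $\sin^2\theta=O(d^{-1/5})$ and the bullet follows.

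For the supervised bullet I would work directly with the min-norm multiclass-margin solution $W_{\textup{SL}}=[w_1,w_2,w_3]$, writing $a_i\triangleq\langle e_1,w_i\rangle$ and $c_i\triangleq\langle e_2,w_i\rangle$. (1) \emph{Shift invariance}: each constraint $w_y^\top x\ge w_{y'}^\top x+1$ involves only $w_y-w_{y'}$, so $w_i\mapsto w_i-v$ preserves feasibility, and minimizing $\sum_i\|w_i-v\|^2$ forces the optimum to satisfy $w_1+w_2+w_3=0$, hence $c_1+c_2+c_3=0$. (2) \emph{Norm bound}: an explicit feasible solution — separate the three classes along $e_1$ with a $\Theta(1)$ coefficient, and ``memorize'' the $n_3\le d^{1/5}$ rare points with a vector in the span of their noise directions, which has squared norm $O(n_3/(\rho^2 d))=o(1)$ since $n_3\ll\rho^2 d=d^{3/5}$ — is feasible under the high-probability noise bounds, so $\|W_{\textup{SL}}\|_F^2=O(1)$ and thus $|a_i|,|c_i|=O(1)$. (3) \emph{Leverage through the frequent classes}: since the $q=1$ frequent points carry an $e_2$-component of magnitude $\tau=d^{1/5}$, the margin constraints between class $1$ and class $2$ at such points, in both directions, give $\tau|c_1-c_2|\le|a_1-a_2|+1+o(1)=O(1)$, so $|c_1-c_2|=O(d^{-1/5})$; similarly the $q=1$ constraints between class $1$ and class $3$ and between class $2$ and class $3$ give $c_1-c_3,\,c_2-c_3\le O(d^{-1/5})$. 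With $c_1+c_2+c_3=0$ (so $c_3=-2c_1+O(d^{-1/5})$) this yields $c_1,c_2\le O(d^{-1/5})$ and $c_3\ge-O(d^{-1/5})$.

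It remains to rule out a large \emph{positive} $c_3$ (equivalently, large negative $c_1,c_2$), the one regime the leverage does not reach because the rare class's genuine $+e_2$ signal has only unit scale. For this I would use an exchange argument: from an optimal solution with $c_3$ above the target threshold, decrease $c_3$ by $\delta$ and raise each of $c_1,c_2$ by $\delta/2$ (keeping the sum $0$), restore the weakened margins of class $3$ against classes $1$ and $2$ by adding to $w_3$ a vector in the rare-noise span (squared norm $O(\delta^2 n_3/(\rho^2 d))=o(\delta^2)$), and absorb the $O(\delta\,d^{-2/5}\sqrt{\log d})$ by-products in the remaining constraints with a lower-order adjustment of $a_1,a_3$; the net change in $\|W_{\textup{SL}}\|_F^2$ is $-3c_3\delta+O(\delta^2)+O(\delta\,d^{-1/5})$, strictly negative for small $\delta$ once $c_3$ exceeds a constant times $d^{-1/5}$, contradicting optimality — unless the $q=1$ constraint between class $1$ and class $3$ is so tight that it already forces $c_3-c_1=o(d^{-1/5})$. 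Either way $|c_i|=O(d^{-1/5})\le O(d^{-1/20})$, leaving ample room for the concentration losses behind the deliberately loose exponent. The step I expect to be the main obstacle is exactly this hard-direction bound: the exchange perturbation must be engineered so that it does not violate any other, possibly tight, margin constraint, which is the delicate bookkeeping. On the SSL side the analogous crux is the matrix concentration — bounding the cross-block of $\widehat{\Sigma}$ and confirming the $\Theta(1)$ spectral gap at the stated failure probability, which is what pins down how large the polynomials $n_1,n_2$ must be.
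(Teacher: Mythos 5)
Your SSL half is sound and starts from the same pivotal reduction as the paper: the regularized objective is minimized by a $W$ whose row span is a top-2 eigenspace of the empirical second-moment matrix. From there you take a different route — explicit population second moment, operator-norm concentration, a $\Theta(1)$ gap between the second and third eigenvalues, and Davis–Kahan. The paper avoids Davis–Kahan and any eigengap claim entirely: it proves only two quadratic-form facts, $e_2^\top M e_2\ge\Omega(d^{2/5})$ and $u^\top M u\le O(1)$ for every unit $u\perp e_2$, and deduces by an elementary argument that $e_2$ is $1-O(d^{-1/5})$-aligned with the top eigenvector. Your version is workable, but note that the matrix concentration you defer (cross-block bound and the gap at failure probability $e^{-d^{1/10}}$) is exactly where the remaining work sits, whereas the paper needs only entrywise control of $\frac1n\sum_i\xi_i\xi_i^\top$ and a bound on the empirical mean of the noise.

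The supervised half has a genuine gap, precisely at the step you flag. Your exchange perturbation ($c_3\mapsto c_3-\delta$, $c_1,c_2\mapsto c_{1,2}+\delta/2$) changes $(w_1-w_3)^\top x$ and $(w_2-w_3)^\top x$ on every frequent point with $q=1$ by $-\tfrac32\tau\delta=-\Theta(d^{1/5}\delta)$, because those points carry an $e_2$-component of magnitude $\tau$; this first-order, $\tau$-amplified by-product is absent from your $O(\delta\,d^{-2/5}\sqrt{\log d})$ accounting. It also cannot be absorbed cheaply: restoring those constraints by inflating $a_1-a_3$ and $a_2-a_3$ costs $\Theta(\tau\delta)$ in squared norm, which dwarfs the $3c_3\delta$ you save (since $c_3=O(1)\ll\tau$), and memorizing $\poly(d)$ frequent points is even more expensive. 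Your fallback — that tightness of a $q=1$ class-1-vs-class-3 constraint forces $c_3-c_1=o(d^{-1/5})$ — is false as stated: tightness gives $\tau(c_3-c_1)=1-(a_1-a_3)-\rho(w_1-w_3)^\top\xi_i$, and the per-point noise term is data-dependent, so it can be as large as $\rho\sqrt d\,\|w_1-w_3\|=\Theta(d^{3/10})$ and does not pin $c_3-c_1$ at scale $d^{-1/5}$. Thus the large-positive-$c_3$ regime — whether SL uses $e_2$ to fit the rare class, i.e. the substantive content of the bullet — is not ruled out. The paper closes this case with a budget argument for which you already have every ingredient: the explicit feasible solution gives $\|w_1\|_2^2+\|w_2\|_2^2+\|w_3\|_2^2\le 2+O(d^{-1/10})$; averaging the $q=0$ constraints gives $(w_1-w_3)^\top e_1\ge 1-O(d^{-1/10})$ and $(w_2-w_3)^\top(-e_1)\ge 1-O(d^{-1/10})$; a short case analysis on $t=w_3^\top e_1$ then forces $\sum_i\langle w_i,e_1\rangle^2\ge 2-O(d^{-1/10})$, so $\sum_i\langle w_i,e_2\rangle^2\le O(d^{-1/10})$, bounding all three $|\langle w_i,e_2\rangle|$ (including a positive $c_3$) by $O(d^{-1/20})$ with no exchange argument at all.
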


Supervised learning results in features $W_{\textup{SL}}$ whose rows have small correlation with the transferable feature $e_2$, indicating that supervised learning only learns features for classifying the frequent classes and ignore the transferable features. In contrast, self-supervised learning  recovers $e_2$ well, even though $e_2$ is not relevant to classifying the frequent classes. The proofs are deferred to Section~\ref{sec:proof}. The analysis of supervised learning uses tools from the theory on max margin classifier, and particularly inspired by the lower bound technique in~\citet{wei2019regularization}. The analysis of the self-superivsed learning is somewhat inspired by~\citet{haochen2021provable}, but does not rely on it because the instances here are more structured. 

\subsection{Illustrative Semi-synthetic Experiments}\label{sec:semi}
In the previous subsection, we have shown that self-supervised learning provably learns label-irrelevant-but-transferable features from the frequent classes which can help classify the rare class in the toy case, while supervised learning mainly focuses on the label-relevant features. However, in real-world datasets, it is intractable to distinguish the two groups of features. To amplify this effect in a real-world dataset and highlight the insight of the theoretical analysis, we design a semi-synthetic experiment on SimCLR~\citep{chen2020simple} to validate our conclusion. 

\textbf{Dataset.} In the theoretical analysis above, the frequent classes contain both features related to the classification of frequent classes and features transferable to the the rare classes. Similarly, we consider an imbalanced pre-training dataset with two groups of features modified from CIFAR-10 as shown in Figure~\ref{fig2} (Left). We construct classes 1-5 as the frequent classes, where each class contains 5000 examples. Classes 6-10 are the rare classes, where each class has 10 examples. In this case, the ratio of imbalance $r = 0.002$. Each image from classes 1-5 consists of a left half and a right half. The left half of an example is from classes 1-5 of the original CIFAR-10 and corresponds to the label of that example. The right half is from a random image of CIFAR-10, which is label-irrelevant. In contrast, the left half of an example from classes 6-10 is blank, whereas the right half is label-relevant and from classes 6-10 of the original CIFAR-10. In this setting, features from the left halves of the images are correlated to the classification of the frequent classes, while features from the right halves are label-irrelevant for the frequent classes, but can help classify the rare classes. Note that features from the right halves cannot be directly learned from the rare classes since they have only 10 examples per class. This is consistent with the setting of Theorem~\ref{thm:toy_example}. 

\begin{figure}[t]
  \centering
   \subfigure{
    \includegraphics[width=0.74\textwidth]{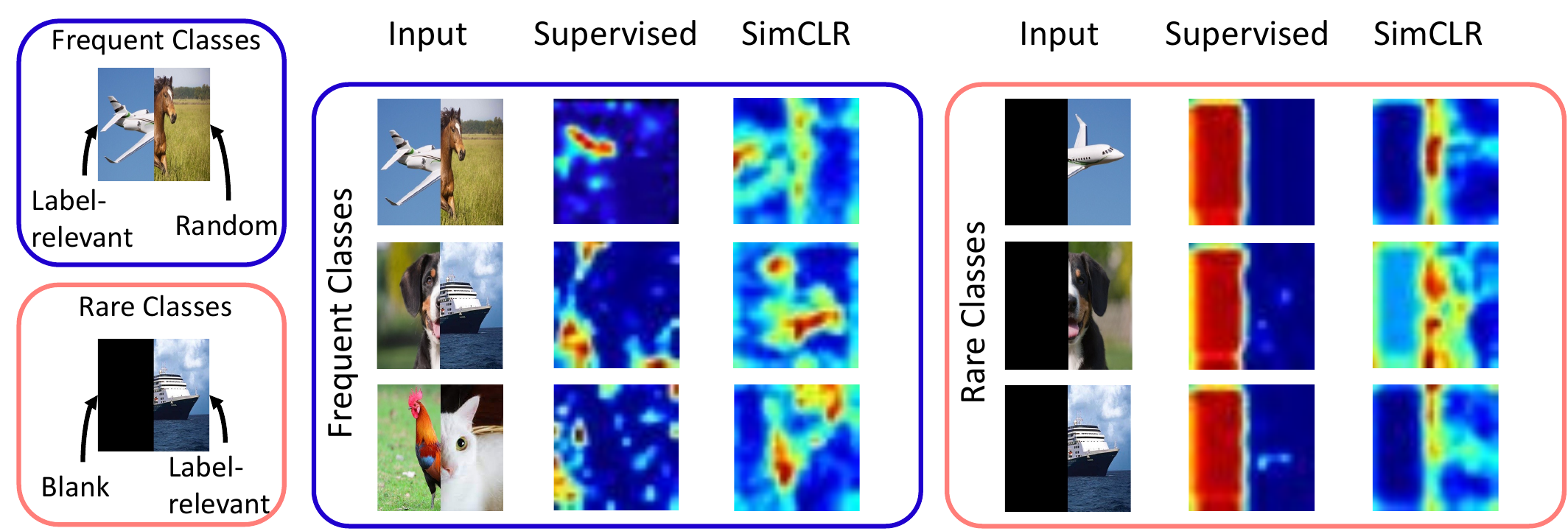} 
    }
    \hspace{3pt}
   \subfigure{
    \includegraphics[width=0.185\textwidth]{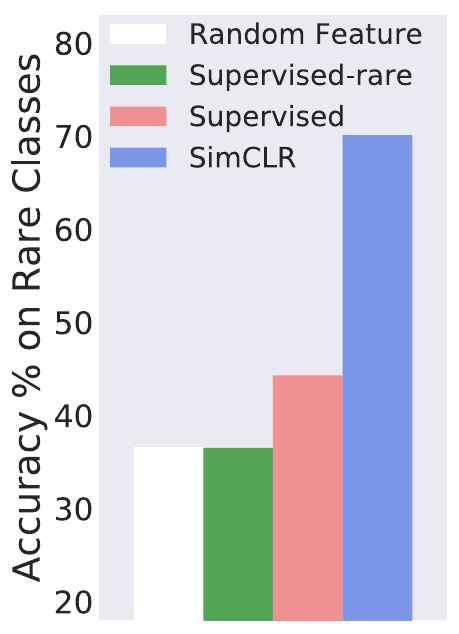}
    }
  \caption{\textbf{Visualization of SSL's features in semi-synthetic settings.} \textbf{Left}: The right halves of the rare examples decide the labels, while the left are blank. The left halves of the frequent examples decide the labels, while the right halves are random half images, which contain label-irrelevant-but-transferable features. \textbf{Middle}: Visualization of feature activations with Grad-CAM~\citep{selvaraju2017grad}. SimCLR learns features from both left and right sides, whereas SL mainly learns label-relevant features from the left side of frequent data and ignore label-irrelevant features on the right side. \textbf{Right}: Accuracies evaluated on rare classes. The head linear classifiers are trained on 25000 examples from the 5 rare classes. Indeed, SimCLR learns much better features for rare classes than SL. Random Feature (feature extractor with randomly weights) and supervised-rare (features trained with only the rare examples) are included for references. 
  } 
  \label{fig2}
  \vspace{-3pt}
\end{figure}

\textbf{Pre-training.} We pre-train the representations on the semi-synthetic imbalanced dataset. For supervised learning, we use ResNet-50 on this 10-way classification task. For self-supervised learning, we use SimCLR with ResNet-50. To avoid confusing the left and right parts, we disable the random horizontal flip in the data augmentation. After pre-training, we fix the representations and train a linear classifier on top of the representations with balanced data from the 5 rare classes (25000 examples in total) to test if the model learns proper features for the rare classes during pre-training. In Figure~\ref{fig2} (Right), we test the classifier on the rare classes. In Figure~\ref{fig2} (Middle), we further visualize the Grad-CAM~\citep{selvaraju2017grad} of the representations on the held-out set\footnote{CIFAR images are of low resolution. For visualization, we use high resolution version of the CIFAR-10 images in Figure~\ref{fig2} (Middle). We also provide the visualization on original CIFAR-10 images in Figure~\ref{fig7}.}.

\textbf{Results.} As a sanity check, we first pre-train a supervised model with only the 50 rare examples and train the linear head classifier with 25000 examples from the rare classes (5-way classification) to see if the model can learn proper features for the rare classes with only rare examples (Supervised-rare in Figure~\ref{fig2} (Right)). As expected, the accuracy is $36.5\%$, which is almost the same as randomly initialized representations with trained head classifier, indicating that the model cannot learn the features for the rare classes with only rare examples due to the limited number of examples. We then compare supervised learning with self-supervised learning on the whole semi-synthetic dataset. In Figure~\ref{fig2} (Right), self-supervised representations perform much better than supervised representations on the rare classes ($70.1\%$ vs $44.3\%$). We further visualize the activation maps of representations with Grad-CAM. Supervised learning mostly activate the left halves of the examples for both frequent and rare classes, indicating that it mainly learn features on the left. In sharp contrast, self-supervised learning activates the whole image on the frequent examples and the right part on the rare examples, indicating that it learns features from both parts.

\vspace{-0.1in}
\section{Improving SSL on Imbalanced Datasets with Regularization}\label{method}

In this section, we aim to further improve the performance of SSL to close the gap between imbalanced and balanced datasets. Many prior works on imbalanced supervised learning regularize the rare classes more strongly, motivated by the observation that the rare classes suffer from more overfitting~\citep{cao2019learning,cao2021heteroskedastic}. Inspired by these works, we compute the generalization gaps (i.e., the differences between empirical and validation pre-training losses) on frequent and rare classes for the step-imbalance CIFAR-10 datasets (where 5 classes are frequent class with 5000 examples per class and the rest are rare with 50 examples per class). Indeed, as shown in Table~\ref{imagenet} (a), we still observe a similar phenomenon---the frequent classes have much smaller pre-training generalization gap than the rare classes (0.035 vs. 0.081), which indicates the necessity of more regularization on the rare classes. 


We need a data-dependent regularizer that can have different effects on rare and frequent examples. Thus, weight decay or dropout~\citep{srivastava2014dropout} are not suitable. 
The prior work of~\citet{cao2019learning} regularizes the rare classes more strongly with larger margin, but it does not apply to SSL where no labels are available.  Inspired by~\citet{cao2021heteroskedastic}, we adapt sharpness-aware minimization (SAM)~\citep{foret2021sharpnessaware} to imbalanced SSL. 

\textbf{Reweighted SAM (rwSAM).} SAM improves model generalization by penalizing loss sharpness. Suppose the training loss of the representation $f_\phi$ is $\widehat{L}(\phi)$, i.e. $\widehat{L}(\phi) = \frac{1}{n} \sum_{j = 1} ^{n} \ell (x_j, \phi)$. SAM seeks parameters where the loss is uniformly low in the neighboring area,
\begin{align}
\mathop{\min}_{\phi}\widehat{L}(\phi + \epsilon(\phi)),\quad \textup{where} \quad \epsilon(\phi) = \mathop{\arg\max}_{\|\epsilon\| < \rho} \epsilon^\top \nabla_{\phi}\widehat{L}(\phi).
\end{align}
To take the weight of different examples into account, we add reweighting to the inner maximization step of SAM. Intuitively, we wish the optimization landscape to be flatter for rare examples, which is in effect regularizing the model more on rare examples. Concretely, consider the reweighted training loss associated with weight vector $w \in \Real^{n}$, $\widehat{L}_w(\phi) = \frac{1}{n} \sum_{j = 1} ^{n} w_j \ell (x_j, \phi)$. The reweighted SAM objective re-weights the regularization-related terms (e.g., $\epsilon_w$) but not the training loss $\widehat{L}$:
\begin{align}
\mathop{\min}_{\phi}\widehat{L}(\phi + \epsilon_w(\phi)),\quad \textup{where} \quad \epsilon_w(\phi) = \mathop{\arg\max}_{\|\epsilon\| < \rho} \epsilon^\top \nabla_{\phi}\widehat{L}_w(\phi).
\end{align}

\textbf{Assigning Weight with Kernel Density Estimation.} The weight $w_j$ of an example $x_j$ should be inversely correlated with the frequency of the corresponding class $y_j$. However, we have no access to the labels. In order to approximate the frequency of examples, we use kernel density estimation on top of the representations $f_\phi$. Concretely, denote by $K(\cdot, h)$ the Gaussian density with bandwidth $h$. We assign $w_i$ to be inversely correlated with the estimated density, i.e., $w_i = \big( \frac{1}{n} \sum_{j = 1} ^{n} K(f_\phi(x_i) - f_\phi(x_j), h) \big)^{-\alpha}$ where $h$ and $\alpha > 0$ are hyperparameters selected by cross validation.

\subsection{Experiments}
We test the proposed rwSAM on CIFAR-10 with step or exponential imbalance and ImageNet-LT~\citep{liu2019large}. After self-supervised pre-training on the long-tailed dataset, we evaluate the representations by (1) linear probing on the balanced in-domain dataset and (2) fine-tuning on downstream target datasets.  For (1) and (2), we compare with SSL, SSL+SAM (w/o reweighting), and SSL balanced, which learns the representations on the balanced dataset \textit{with the same number of examples}. Implementation details and additional results are deferred to Section~\ref{implementation4}. Code is available at \url{https://github.com/Liuhong99/Imbalanced-SSL}.

\textbf{Results.} Table~\ref{imagenet} (a) summarizes results on long tailed CIFAR-10. With both step and exponential imbalance, rwSAM improves the performance of SimSiam over $1\%$, and even surpasses the performance of SimSiam on balanced CIFAR-10 with the same number of examples. Note that compared to SimSiam, rwSAM closes the generalization gap of pre-training loss on rare examples from $0.081$ to $0.066$, which verifies the effect of re-weighted regularization. In Table~\ref{imagenet} (b), we provide the result of fine-tuning on downstream tasks with representations pre-trained on ImageNet-LT. The proposed method improves the transferability of representations to downstream tasks consistently.

\begin{table}[t]
\addtolength{\tabcolsep}{1pt} 
\centering 

\begin{small}
\begin{tabular}{l|ccc|c}
\toprule
(a) CIFAR, ID &\multicolumn{3}{c|}{$r = 0.01$, step} & {$r = 0.01$, exp}\\
\midrule
Method & \texttt{Acc.} ($\%$) & Gap Freq. & Gap Rare & \texttt{Acc.} ($\%$)\\
\midrule
SimSiam & 84.3 $\pm$ 0.2 & 0.035 & 0.081 & 81.4 $\pm$ 0.3 \\
SimSiam+SAM & 84.7 $\pm$ 0.3 & 0.044 & 0.075 & 82.1 $\pm$ 0.4\\
SimSiam, balanced & 85.8 $\pm$ 0.2 & 0.038 & 0.037 & 82.0 $\pm$ 0.4\\
\midrule
\rowcolor{mygray}SimSiam+rwSAM & 85.6 $\pm$ 0.4 & 0.037 & 0.066 & 82.7 $\pm$ 0.5\\
\bottomrule
\end{tabular}
\end{small}

\addtolength{\tabcolsep}{-2.4pt} 
\begin{small}
\begin{tabular}{l|cccc|r}
\toprule
(b) ImageNet, OOD& \multicolumn{5}{c}{Target dataset} \\
\midrule
Method & CUB & Cars & Aircrafts & Pets & Avg. \\
\midrule
MoCo v2 & 69.9 $\pm$ 0.7 & 88.4 $\pm$ 0.4 & 82.9 $\pm$ 0.6 & 80.1 $\pm$ 0.6 & 80.3\\
MoCo v2+SAM & 69.9 $\pm$ 0.5 & 88.8 $\pm$ 0.5 & 83.4 $\pm$ 0.4 & 81.5 $\pm$ 0.8 & 80.9 \\
MoCo v2, balanced & 69.8 $\pm$ 0.5 & 88.6 $\pm$ 0.4 & 82.7 $\pm$ 0.5 & 80.0 $\pm$ 0.4 & 80.2\\
\midrule
\rowcolor{mygray}MoCo v2+rwSAM & 70.3 $\pm$ 0.7 & 88.7 $\pm$ 0.3 & 84.9 $\pm$ 0.6 & 81.7 $\pm$ 0.4 & 81.4 \\
\midrule
SimSiam & 70.0 $\pm$ 0.3 & 87.0 $\pm$ 0.6 & 81.5 $\pm$ 0.7 & 83.8 $\pm$ 0.5 & 80.6 \\
SimSiam, balanced & 70.5 $\pm$ 0.8 & 87.9 $\pm$ 0.7 & 81.8 $\pm$ 0.7 & 82.7 $\pm$ 0.4 & 80.7\\
\midrule
\rowcolor{mygray}SimSiam+rwSAM & 70.7 $\pm$ 0.8 & 88.4 $\pm$ 0.6 & 82.6 $\pm$ 0.6 & 84.0 $\pm$ 0.4 & 81.4\\\bottomrule
\end{tabular}
\end{small}
\caption{\textbf{Results of the proposed rwSAM.} (a) Results on CIFAR-10-LT with linear probe and ID evaluation. SimSiam+rwSAM on imbalanced datasets performs even better than SimSiam on balanced datasets with the same number of examples. Note that rwSAM closes the generalization gap on the rare examples ($0.081$ vs. $0.066$). (b) Results on ImageNet-LT with fine-tuning and OOD evaluation. rwSAM improves the performance of MoCo v2 and SimSiam on the target datasets.}
\label{imagenet}
\vskip -0.1in
\end{table}

\section{Related Work}
\subsection{Supervised Learning with Dataset Imbalance} 

There exists a long line of works studying supervised imbalanced classification~\citep{he2009learning,krawczyk2016learning}.
Early works on ensemble learning adjusted the boosting and bagging algorithms with resampling in the imbalanced setting~\citep{guo2004learning,wang2009diversity}. Classical methods include resampling and reweighting. \citet{hart1968condensed,kubat1997addressing,chawla2002smote,he2008adasyn,ando2017deep, buda2018systematic,hu2020learning} proposed to re-sample the data to make the frequent and rare classes appear with equal frequency in training. Re-weighting assigns different weights for head and tail classes and eases the optimization difficulty under class imbalance~\citep{cui2019class,tang2020long,wang2017learning, huang2019deep}. \citet{byrd2019effect} empirically studied the effect of importance weighting and found out that importance weighting does not change the solution without regularization. \citet{xu2021understanding} justified this finding with theoretical analysis based on the implicit bias of gradient descend on separable data. 

\citet{cao2019learning} initiated the idea of using re-weighted regularization and proposed the principle of regularizing rare classes more heavily. Re-weighted regularizaton is shown to be typically more effective than re-weighting or re-sampling the losses.  \citet{cao2021heteroskedastic} proposed to regularize the local curvature of loss on imbalanced and noisy datasets. 

Works in the modern deep learning era also designed specific losses or training pipelines for imbalanced recognition~\citep{tang2020long,hong2021disentangling,wang2021longtailed,zhang2021distribution}. \citet{lin2017focal} proposed to focus on hard examples to prevents easy examples from overwhelming the models during training. Meta-learning approaches meta-learned the weight or the ensemble~\citep{wang2017learning,ren2018learning,shu2019meta,lee2020learning}. \citet{liu2019large,jamal2020rethinking,liu2020deep} improved the performance on the rare examples by explicitly encourages transfer learning. Re-calibration methods adjust the logits of the outputs with re-weighting~\citep{tian2020posterior,menon2021longtail}.

Several works also studied the supervised representations under dataset imbalance. \citet{kang2019decoupling,wang2020devil} found out that the representations of supervised learning perform better than the classifier itself with class imbalance. \citet{yang2020rethinking} studied the effect of self-training and self-supervised pre-training on supervised imbalanced recognition classifiers. In contrast, the focus of our paper is the effect of class imbalance on \textit{self-supervised representations}.

\subsection{Self-supervised Learning} 
Earlier works on self-supervised learning learned visual representations by context prediction~\citep{doersch2015unsupervised,wang2017transitive}, solving puzzles~\citep{noroozi2016unsupervised}, and rotation prediction~\citep{gidaris2018unsupervised}. Recent works on self-supervised learning successfully learn representations that approach the supervised baseline on ImageNet and various downstream tasks, and closed the gap with supervised pre-training. 
Contrastive learning methods attract positive pairs and drive apart negative pairs~\citep{he2020momentum,chen2020simple}. Siamese networks predict the output of the other branch, and use stop-gradient to avoid collapsing~\citep{grill2020bootstrap,chen2021exploring}. Clustering methods learn representations by performing clustering on the representations and improve the representations with cluster index~\citep{caron2020unsupervised}. \citet{cole2021does} investigated the effect of data quantity and task granularity on self-supervised representations. \citet{goyal2021self} studied self-supervised methods on large scale datasets in the wild, but they do not consider dataset imbalance explicitly. \citet{kotar2021contrasting} studied whether dataset imbalance can have a significant impact on contrastive learning representations. \citet{madaan2022representational} found out that self-supervised representations are better at continual learning than supervised representations.
Several works have also theoretically studied the success of self-supervised learning~\citep{arora2019theoretical, haochen2021provable, wei2021pretrained, lee2020predicting, tian2021understanding,tosh2020contrastive,tosh2021contrastive}. Our analysis in Section~\ref{sec:toy} is partially inspired by the work~\citet{haochen2020shape}. 

\section{Conclusion}
Our paper is the first to study the problem of robustness to imbalanced training of self-supervised representations. We discover that self-supervised representations are more robust to class imbalance than supervised representations and explore the underlying cause of this phenomenon. As supervised learning is still the de facto standard for pre-training, our work should encourage practitioners to use SSL for pre-training instead, or at least consider evaluating the impact of imbalanced pre-training on their downstream task. Our experiments mainly focus on vision datasets. Future works can study the effect of dataset imbalance on NLP datasets, where self-supervised pre-training is a dominant approach. We hope our study can inspire analysis of self-supervised learning in broader environments in the wild such as domain shift, and provide insights for the design of future unsupervised learning methods.

\section*{Acknowledgements}
We thank Colin Wei, Margalit Glasgow, and Shibani Santurkar for helpful discussions. TM acknowledges support of Google Faculty Award, NSF IIS 2045685, the Sloan Fellowship, and JD.com. Toyota Research Institute provided funds to support this work. 

\bibliography{all}
\bibliographystyle{plainnat}
\newpage
\appendix
\section{Details of Section~\ref{sec2}}\label{details2}
\subsection{Implementation Details}\label{implementation2}
\begin{figure}[t]
	\centering
	\subfigure{
		\includegraphics[width=0.46\textwidth]{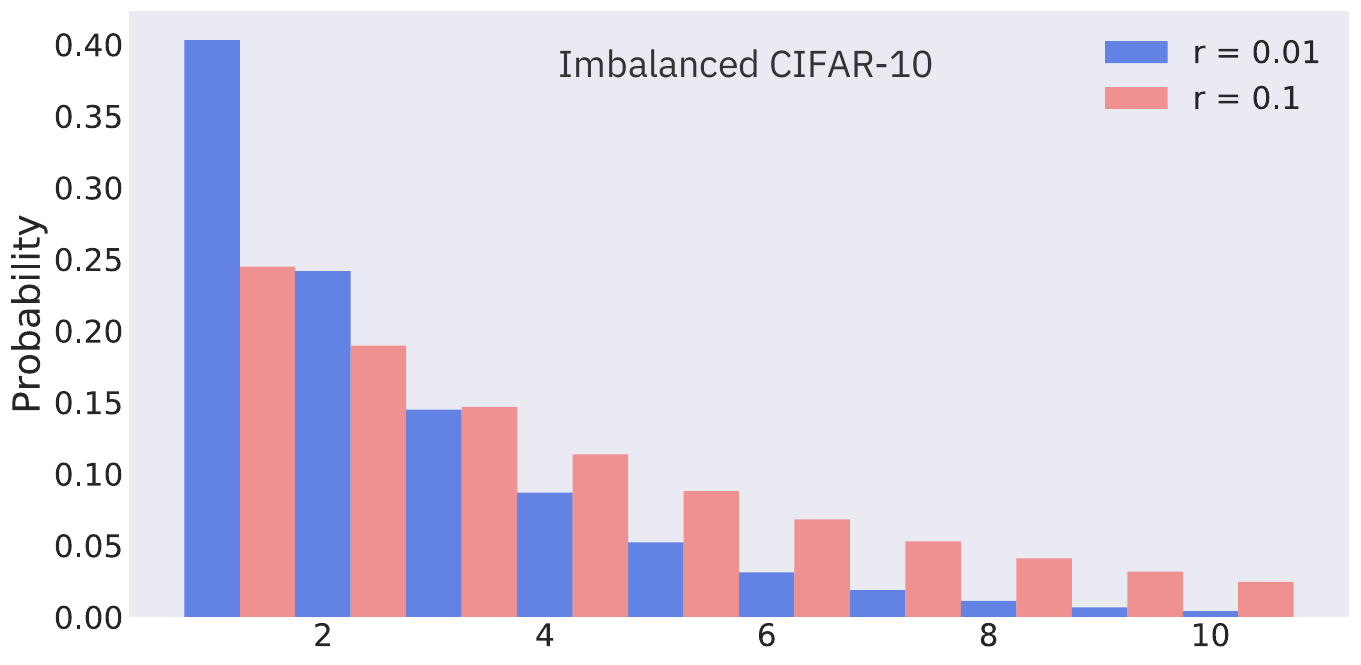} 
	}
	\hfil
	\subfigure{
		\includegraphics[width=0.471\textwidth]{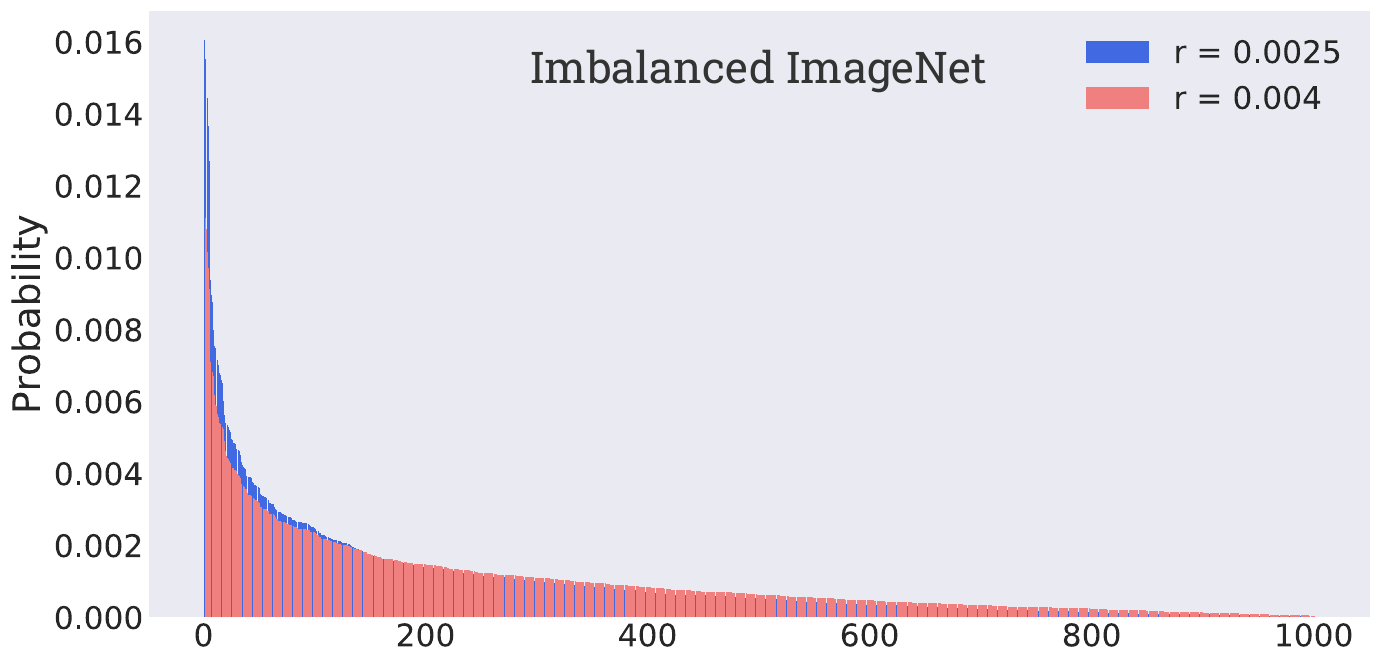}
	}
	\caption{\textbf{Visualization of the label distributions.} We visualize the label distributions of the imbalanced CIFAR-10 and ImageNet. We consider two imbalance ratios $r$ for each dataset.Imbalanced CIFAR-10 follows the exponential distribution, while imbalanced ImageNet follows Pareto distribution.} 
	\label{fig1}
\end{figure}
\textbf{Generating Pre-training Datasets.} CIFAR-10~\citep{krizhevsky2009learning} contains 10 classes with 5000 examples per class. We use exponential imbalance, i.e. for class $c$, the number of examples is $5000 \times e^{\beta(c-1)}$. we consider imbalance ratio $r \in \{0.1, 0.01\}$, i.e. the number of examples belonging to the rarest class is 500 or 50. The total $n_s$ is therefore 20431 or 12406. ImageNet-LT is constructed by~\citet{liu2019large}, which follows the Pareto distribution with the power value 6. The number of examples from the rarest class is 5. We construct a long tailed ImageNet following the Pareto distribution with more imbalance, where the number of examples from the rarest class is 3. The total number of examples $n_s$ is 115846 and 80218 respectively. For each ratio of imbalance, we further downsample the dataset with the sampling ratio in $\{0.75,0.5,0.25,0.125\}$ to formulate different number of examples. To compare with the balanced setting fairly, we also sample balanced versions of datasets with the same number of examples. Note that each variant of the dataset is fixed after construction for all algorithms. See the visualization of label distributions of dataset variants in Figure~\ref{fig1}.

\textbf{Training Procedure.} For supervised pre-training, we follow the standard protocol of~\citet{he2016deep} and~\citet{kang2019decoupling}. On the standard ImageNet-LT, we train the models for 90 epochs with step learning rate decay. For down-sampled variants, the training epochs are selected with cross validation. Fo self-supervised learning, the initial learning rate on the standard ImageNet-LT is set to 0.025 with batch-size 256. We train the model for 300 epochs on the standard ImageNet-LT and adopt cosine learning rate decay following~\citep{he2020momentum, chen2021exploring}. We train the models for more epochs on the down sampled variants to ensure the same number of total iterations. The code on CIFAR-10 LT is adapted from \url{https://github.com/Reza-Safdari/SimSiam-91.9-top1-acc-on-CIFAR10}.

\textbf{Evavluation.} For in-domain evaluation (ID), we first train the the representations on the aforementioned dataset variants, and then train the linear head classifier on the \textit{full balanced} CIFAR10 or ImageNet. We set the initial learning rate to 30 when training the linear head with batch-size 4096 and train for 100 epochs in total. For in-domain out-of-domain evaluation (OOD) on ImageNet, we first train the the representations on the aforementioned dataset variants, and then fine-tune the model to CUB-200~\citep{wah2011caltech}, Stanford Cars~\citep{krause20133d}, Oxford Pets~\citep{parkhi2012cats}, and Aircrafts~\citep{maji2013fine}. The number of examples of these target datasets ranges from 2k to 10k, which is a reasonable scale as the number of examples of the pre-training dataset variants ranges from 10k to 110k. The representation quality is evaluated with the average performance on the four tasks. We set the initial learning rate to 0.1 in fine-tuning train for 150 epochs in total. For in-domain out-of-domain evaluation (OOD) on CIFAR-10, we use STL-10 as the downstream target tasks and perform linear probe.

\subsection{Additional Results}\label{additional}

To validate the phenomenon observed in Section~\ref{sec2} is consistent for different self-supervised learning algorithms, we provide the OOD evaluation results of SimSiam trained on ImageNet variants and relative performance gap with balanced datasets in Figure~\ref{fig6}. SimSiam representations are also less sensitive to class imbalance than supervised representations.
\begin{figure}[t]
  \centering
   \subfigure[OOD Accuracy.]{
    \includegraphics[width=0.425\textwidth]{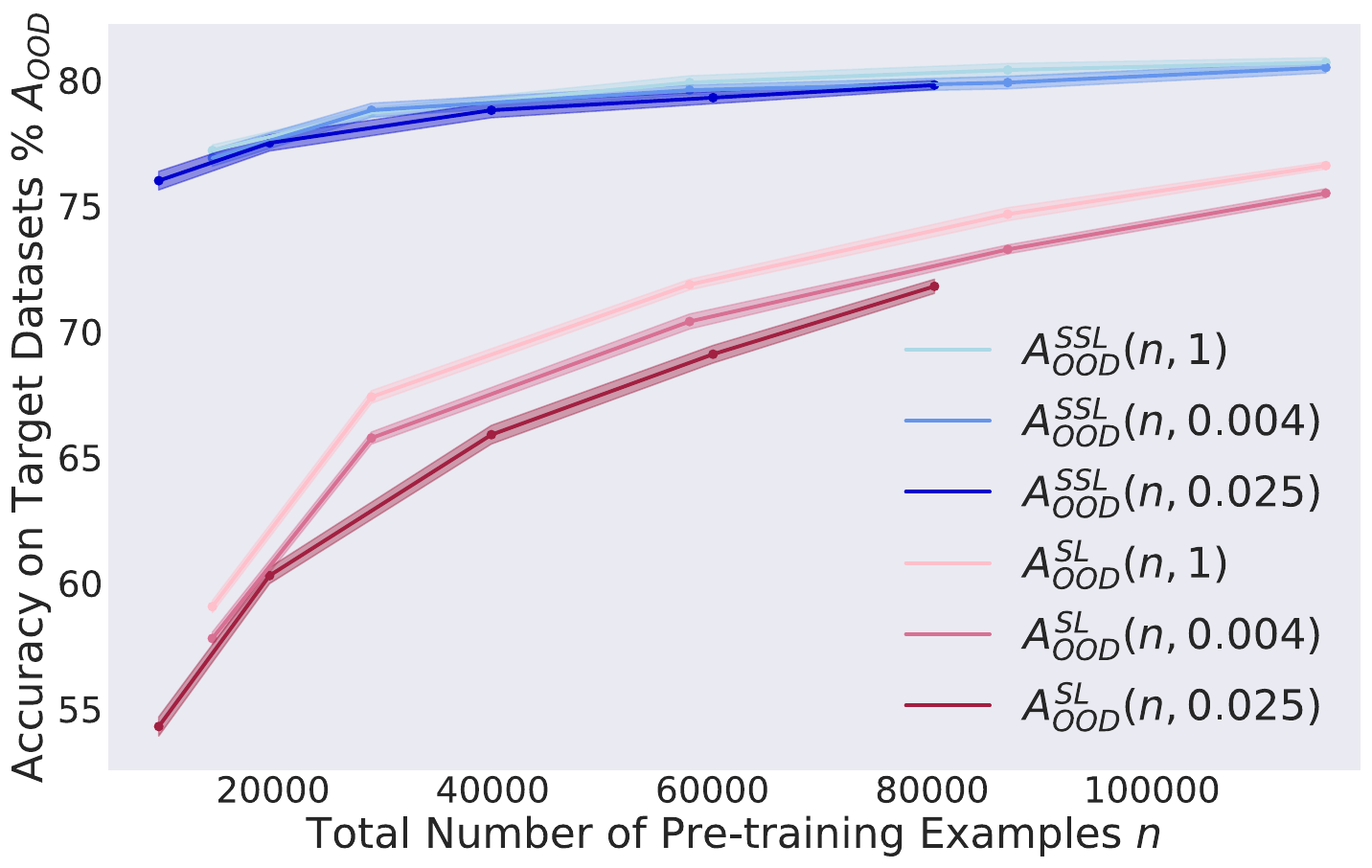} 
    }
    \hfill
   \subfigure[Relative Gap.]{
    \includegraphics[width=0.52\textwidth]{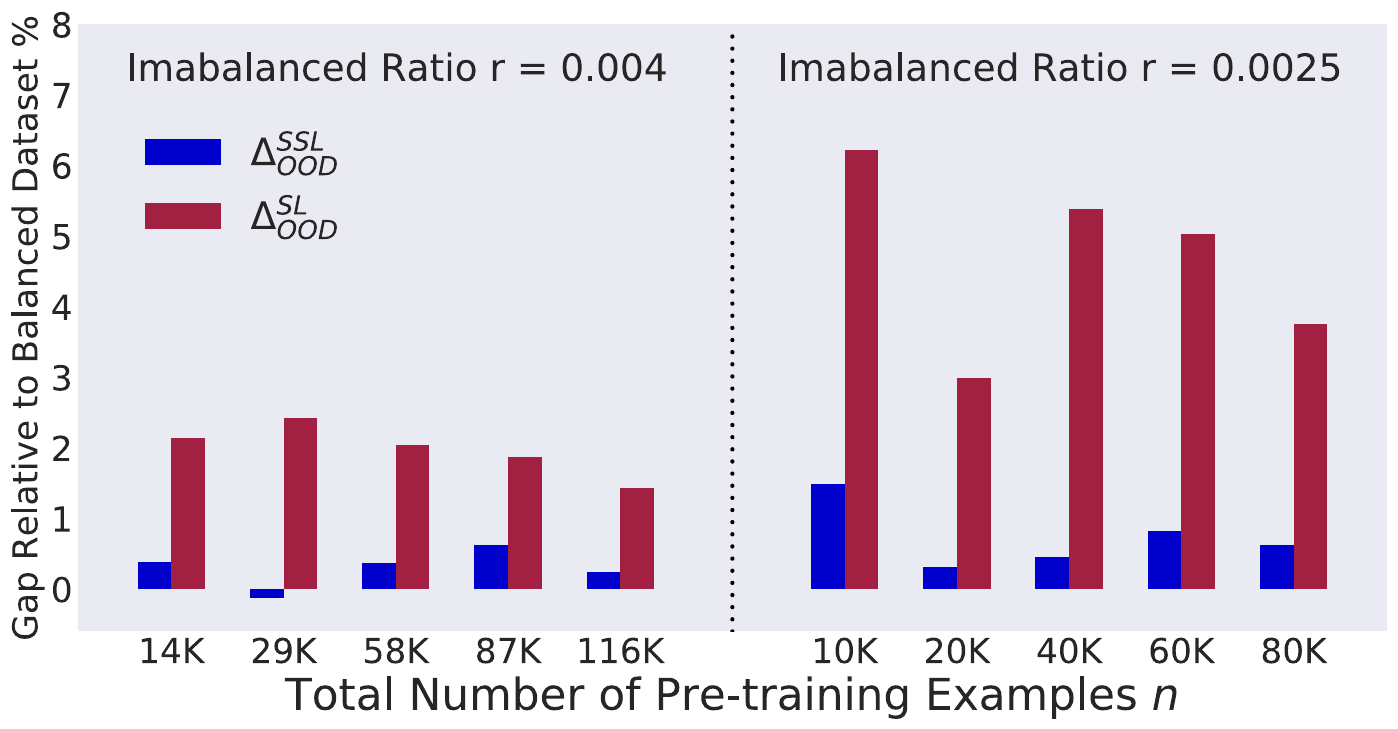}
    }
    \vspace{-10pt}
  \caption{\textbf{OOD Results of SimSiam on ImageNet.} SimSiam also demonstrates more robustness to class imbalance compared to supervised learning. The relative gap to balanced dataset is much smaller than supervised learning across different imbalance ratios.} 
  \label{fig6}
  \vspace{-5pt}
\end{figure}

We also provide the numbers of Figure~\ref{fig3} and Figure~\ref{fig6} in Table~\ref{fig:num}. 

\begin{table}[t]
\addtolength{\tabcolsep}{-3.5pt} 
\centering 
\caption{Numbers in Figure~\ref{fig3} and Figure~\ref{fig6}.} 
\label{fig:num}
\begin{small}
\begin{tabular}{l|ccccc|ccccc|ccccc}
\toprule
Imbalanced Ratio $r$&\multicolumn{5}{c|}{$r = 1$, balanced}&\multicolumn{5}{c|}{$r = 0.004$}&\multicolumn{5}{c}{$r = 0.0025$}\\
\midrule
Data Quantity $n$ & 116K & 87K & 58K & 29K & 14K & 116K & 87K & 58K & 29K & 14K & 80K & 60K & 40K & 20K & 10K\\
\midrule
MoCo V2, ID & 50.4 & 43.5 & 40.9 & 37.0 & 30.8 & 49.5 & 43.2 & 39.5 & 36.6 & 30.5 & 40.6 & 38.8 & 35.5 & 31.9 & 27.2\\
MoCo V2, OOD & 80.3 & 79.8 & 79.7 & 77.4 & 77.0 & 80.2 & 80.1 & 79.5 & 77.8 & 77.3 & 79.2 & 78.8 & 77.7 & 75.6 & 74.4\\
Supervised, ID & 54.3 & 51.6 & 46.1 & 40.5 & 26.3 & 52.9 & 49.6 & 44.0 & 37.3 & 24.9 & 46.1 & 42.0 & 36.3 & 27.5 & 20.3\\
Supervised, OOD & 76.6 & 74.7 & 71.9 & 67.4 & 59.1 & 75.5 & 73.3 & 70.4 & 65.8 & 57.8 & 71.8 & 69.1 & 65.9 & 60.3 & 54.3\\
SimSiam, OOD & 80.7 & 80.4 & 79.9 & 78.7 & 77.2 & 80.6 & 79.9 & 79.6 & 78.8 & 76.9 & 79.8 & 79.3 & 78.8 & 77.5 & 76.0\\
\bottomrule
\end{tabular}
\end{small}
\end{table}

\section{Details of Section~\ref{sec:semi}}
We first generate the balanced semi-synthetic dataset with 5000 examples per class. The left halves of images from classes 1-5 correspond to the labels, while the right halves are random. The left halves of images from class 6-10 are blank, whereas the right halves correspond to the labels. We then generate the imbalanced dataset, which consists of the 5000 examples per class from classes 1-5 (frequent classes), and 10 examples per class from classes 6-10 (rare classes). We use Grad-CAM implementation based on \url{https://github.com/meliketoy/gradcam.pytorch} and SimCLR implementation from \url{https://github.com/leftthomas/SimCLR}. We provide examples and Grad-CAM of the semi-synthetic datasets in Figure~\ref{fig7}. 

\begin{figure}[h]
  \centering
    \includegraphics[width=0.8\textwidth]{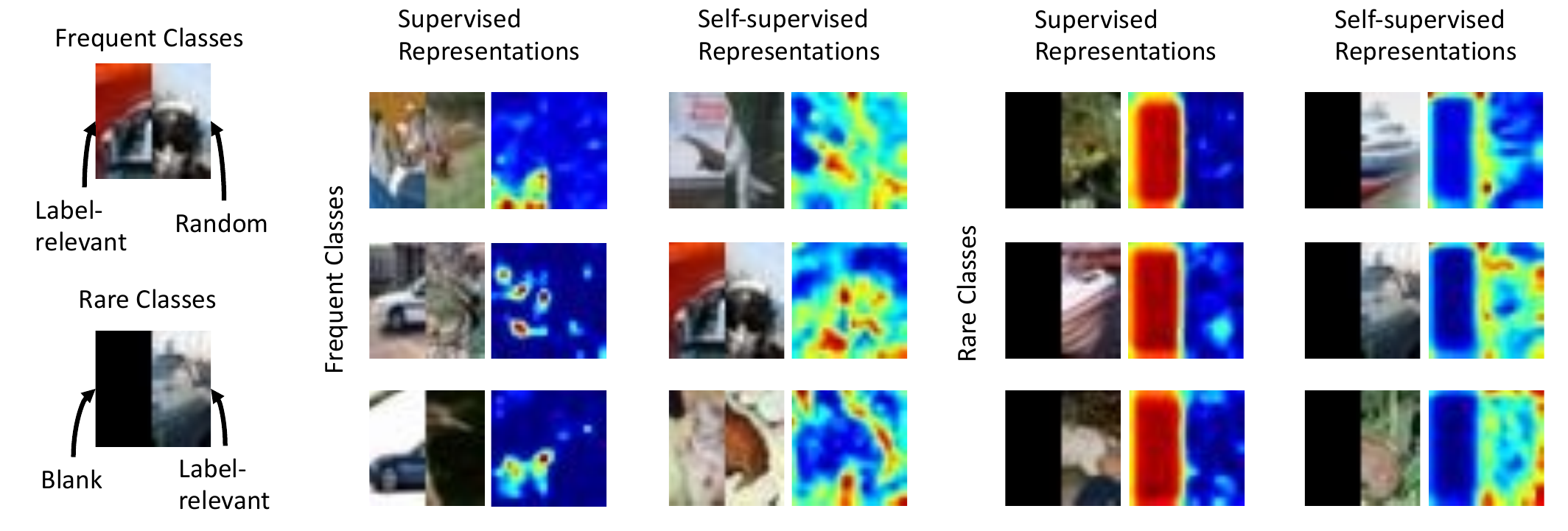}
  \caption{\textbf{Examples of the semi-synthetic Datasets and Grad-CAM visualizations.} SimCLR learns features from both left and right sides, whereas SL mainly learns label-relevant features from the left side of frequent data and ignore label-irrelevant features on the right side. In Figure~\ref{fig2}, we provide results the high-resolution images of the 10 CIFAR classes to make the results easier to interpret. Here we further visualize the results on original CIFAR-10 images.
  } 
  \label{fig7}
\end{figure}

\section{Details of Section~\ref{method}}\label{implementation4}
\subsection{Implementation Details} 
We use the same implementation as Section~\ref{sec2} for supervised and self-supervised learning baselines. We implement sharpness-aware minimization following~\citep{foret2021sharpnessaware}. In each step of update, we first compute the reweighted loss $\widehat{L}_w(\phi) = \frac{1}{n} \sum_{j = 1} ^{n} w_j \ell (x_j, \phi)$ and compute its gradient w.r.t. $\phi$, i.e. $\nabla_{\phi}\widehat{L}_w(\phi)$. Then we can compute $\epsilon({\phi})$ as $\epsilon({\phi}) = \rho \textup{sgn}(\nabla_{\phi}\widehat{L}_w(\phi)) \left|\nabla_{\phi}\widehat{L}_w(\phi)\right| ^{q - 1} / \left(\|\nabla_{\phi}\widehat{L}_w(\phi)\|_q ^q \right)^{1/p}$, where $\frac{1}{p} + \frac{1}{q} = 1$. Finally, we update the model on the loss without reweighting $\widehat{L}(\phi)$ by $\phi = \phi - \eta \nabla_{\phi}\widehat{L}(\phi + \epsilon(\phi))$. A detailed algorithm can be viewed in Algorithm~\ref{alg:rwsam}.

\begin{algorithm}[h]
\caption{Reweighted Sharpness-Aware Minimization (rwSAM)}
\label{alg:rwsam}
\begin{algorithmic}[1]
\STATE {\bfseries Input:} the pre-training dataset $\DSe$.    
\STATE {\bfseries Output:} learned representations $\phi$.
\STATE {\bfseries Stage 1:} compute the weight $w$.

\FOR{$i=0$ {\bfseries to} \texttt{MaxIter}} 
  \STATE Randomly sample a batch of examples $\{x_i\}_{i=1}^{b}$ from $\DSe$.
  \STATE Update the representations $\phi$ on $\{x_i\}_{i=1}^{b}$ to minimize the loss.
   \vspace{-5pt}
    \begin{align*}
    \phi \leftarrow \phi - \eta \nabla_{\phi}\widehat{L}(\phi).
    \end{align*} 
\vspace{-25pt}
\ENDFOR
\STATE Generate the weight with kernel density estimation:
\vspace{-10pt}
\begin{align*}
w_i = \big( \frac{1}{n} \sum_{j = 1} ^{n} K(f_\phi(x_i) - f_\phi(x_j), h) \big)^{-\alpha}.
\end{align*}
\vspace{-15pt}
\STATE {\bfseries Stage 1:} reweighted SAM.
\FOR{$i=0$ {\bfseries to} \texttt{MaxIter}} 
  \STATE Randomly sample a batch of examples $\{x_i\}_{i=1}^{b}$ from $\DSe$.
  \STATE Calculate $\epsilon(\phi)$ based on the reweighted loss $\widehat{L}_w(\phi).$
  \vspace{-5pt}
  \begin{align*}
  \epsilon_{\phi} = \rho \textup{sgn}(\nabla_{\phi}\widehat{L}_w(\phi)) \left|\nabla_{\phi}\widehat{L}_w(\phi)\right| ^{q - 1} / \left(\|\nabla_{\phi}\widehat{L}_w(\phi)\|_q ^q \right)^{1/p}
  \end{align*}
  \vspace{-15pt}
  \STATE Update the representations $\phi$ on $\{x_i\}_{i=1}^{b}$ to minimize the loss and penalize the sharpness,
   \vspace{-5pt}
    \begin{align*}
    \phi \leftarrow \phi - \eta \nabla_{\phi}\widehat{L}(\phi + \epsilon(\phi)).
    \end{align*} 
\vspace{-15pt}
\ENDFOR
\end{algorithmic}
\end{algorithm}

We select the hyperparameters $\rho$ and $\alpha$ with cross validation. On ImageNet-LT and iNaturalist, $\rho = 2$ and $\alpha = 0.5$. On CIFAR-10-LT, $\rho = 5$ and $\alpha = 1.2$.

\subsection{Additional Results}

\begin{wraptable}{r}{6.5cm}
\addtolength{\tabcolsep}{-3.8pt} 
	\centering
	\vspace{-18pt}
	\caption{\small{ImageNet-LT with Supervision. }}
	\label{imagenetsup}
	\begin{small}
	\begin{tabular}{l|l|r}
	\toprule
	Method & Backbone & Acc.\\
	\midrule
	Supervised & ResNet-50 & 49.3\\
	CRT~\citep{kang2019decoupling}& ResNet-50 & 52.0\\
	LADE~\citep{hong2021disentangling}& ResNeXt-50 & 53.0\\
	RIDE~\citep{wang2021longtailed}& ResNet-50 & 54.9\\
	RIDE~\citep{wang2021longtailed}& ResNeXt-50 & 56.4\\
	\midrule
	MoCo V2 & ResNet-50 & 55.0\\
	MoCo V2+rwSAM & ResNet-50 & 55.5\\
	\bottomrule
	\end{tabular}
	\end{small}
	\vspace{-10pt}
\end{wraptable}

We further introduce another evaluation protocol of the representations learned on imbalanced ImageNet: following the protocol of~\citet{kang2019decoupling,yang2020rethinking}, we fine-tune the representations on \textit{imbalanced} ImageNet dataset with supervision, and then re-train the linear classifier with \textit{class-aware resampling}, to compare with supervised imbalanced recognition methods. In MoCo V2 pre-training, we use the standard data augmentation following~\citet{he2020momentum}. In fine-tuning, we use RandAugment~\citep{cubuk2020randaugment}. For this evaluation, we further compare with CRT~\citep{kang2019decoupling}, LADE~\citep{hong2021disentangling}, and RIDE~\citep{wang2021longtailed}, which are strong methods tailored to supervised imbalanced recognition. Results are provided in Table~\ref{imagenetsup}. Supervised here refers to training the feature extractor and linear classifier with supervision on the imbalanced dataset directly. CRT first trains the feature extractor with supervision, and then \textit{re-trains the classifier with class-aware resampled loss}. Note that CRT is performing better than Supervised, indicating that the composition of the head and features learned from supervised learning is more sensitive to imbalanced dataset than the quality of feature extractor itself.

\begin{wraptable}{r}{6.5cm}
\addtolength{\tabcolsep}{-3.0pt} 
	\centering
	\vspace{-29pt}
	\caption{\small{Results of Pascal VOC Detection.}}
	\label{detection}
	\begin{small}
	\begin{tabular}{l|c|c|c}
	\toprule
	$\#$Examples & 116K & 58K & 14K\\
	\midrule
	MoCo v2, balanced & 78.3 & 76.5 & 74.3 \\
    MoCo v2, imbalanced & 77.9 & 76.0 & 74.1 \\
    Supervised, balanced & 74.8 & 71.4 & 61.0 \\
    Supervised, imbalanced & 74.0 & 69.2 & 60.5 \\
	\bottomrule
	\end{tabular}
	\end{small}
	\vspace{-4pt}
\end{wraptable}

Even with a simple pre-training and fine-tuning pipeline, MoCo V2 representations can be comparable with much more complicated state-of-the-arts tailored to supervised imbalanced recognition, further corroborating the power of SSL under class imbalance. With rwSAM, we can further improve the result of MoCo V2.

We also test the performance of SSL with detection downstream tasks. We still consider MoCo v2 and ImageNet-LT following the setting of Section~\ref{sec:setup}. During fine-tuning, we train the models on PascalVOC 07 and PascalVOC 12 training set and test on the PascalVOC 07 test set following the MoCo and SimCLR paper. As shown in Table~\ref{detection}, the gap between imbalance and balanced pertaining with MoCo is much smaller than the gap with supervised learning across all numbers of examples. SSL is still more robust to dataset class imbalance when the downstream task is detection.

\newpage
\section{Proof of Theorem~\ref{thm:toy_example}}\label{sec:proof}

We notate data from the first class as $x^{(1)}_i = e_1 - q_i^{(1)}\tau e_2 + \rho\xi_i^{(1)}$ where $i\in[n_1]$ and $q_i^{(1)}\in\{0, 1\}$. Similarly, we notate data from the second class as $x^{(2)}_i = -e_1 - q_i^{(2)}\tau e_2 + \rho\xi_i^{(2)}$ where $i\in[n_2]$ and $q_i^{(1)}\in\{0, 1\}$. We notate data from the third class as $x^{(3)}_i = e_2 + \rho \xi^{(3)}_i$ where $i\in[n_3]$. Notice that all $\xi_i^{(k)}$ are independently sampled from $\mathcal{N}(0, I)$. 

We first introduce the following lemma, which gives some high probability properties of independent Gaussian random variables.

\begin{lemma}\label{lemma:helper_properties}
	Let $\xi_i\sim \mathcal{N}(0, I)$ for $i\in[n]$. Then, for any $n\le \poly(d)$, with probability at least $1-e^{-d^{\frac{1}{10}}}$ and large enough $d$, we have:
		\begin{itemize}[wide=0.5em, leftmargin =*,  before = \leavevmode\vspace{-0.5\baselineskip}]
			\item{$|\langle \xi_i, e_1 \rangle|\le d^{\frac{1}{10}}$, $|\langle \xi_i, e_2 \rangle|\le d^{\frac{1}{10}}$ and $|\|\xi_i\|_2^2-d|\le 4d^{\frac{3}{4}}$ for all $i\in[n]$.}
		\item{
			$|\langle \xi_i, \xi_j\rangle|\le 3d^{\frac{3}{5}}$ for all $i\ne j$.}
	\end{itemize}
\end{lemma}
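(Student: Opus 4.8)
The plan is to treat each of the four bounds as a standard Gaussian concentration estimate: I would prove a tail bound at scale $d^{1/5}$ (or better) for each individual event, and then absorb the union bound over the $n\le\poly(d)$ samples (respectively $\le n^2$ pairs) into the target failure probability $e^{-d^{1/10}}$, using that $d^{1/5}$ dominates $d^{1/10}+C\log d$ for any constant $C$ once $d$ is large enough.

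For the linear functionals, since $e_1,e_2$ are unit vectors and $\xi_i\sim\mathcal{N}(0,I)$, each of $\langle\xi_i,e_1\rangle$ and $\langle\xi_i,e_2\rangle$ is a standard Gaussian, so the elementary tail bound $\Pr(|Z|\ge t)\le 2e^{-t^2/2}$ with $t=d^{1/10}$ bounds the per-event failure probability by $2e^{-d^{1/5}/2}$. For the squared norm, $\|\xi_i\|_2^2\sim\chi_d^2$; I would invoke the Laurent--Massart bounds $\Pr(\|\xi_i\|_2^2\ge d+2\sqrt{du}+2u)\le e^{-u}$ and $\Pr(\|\xi_i\|_2^2\le d-2\sqrt{du})\le e^{-u}$, and take $u=\sqrt d$, so that $2\sqrt{du}+2u=2d^{3/4}+2\sqrt d\le 4d^{3/4}$ for large $d$; hence $\big|\,\|\xi_i\|_2^2-d\,\big|\le 4d^{3/4}$ fails with probability at most $2e^{-\sqrt d}$.

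For the cross terms with $i\ne j$, conditioning on $\xi_j$ makes $\langle\xi_i,\xi_j\rangle$ distributed as $\mathcal{N}(0,\|\xi_j\|_2^2)$. On the event $\|\xi_j\|_2^2\le 2d$ (which, by the previous paragraph, holds with probability at least $1-2e^{-\sqrt d}$), the Gaussian tail bound gives $\Pr\big(|\langle\xi_i,\xi_j\rangle|\ge 3d^{3/5}\,\big|\,\xi_j\big)\le 2e^{-9d^{6/5}/(4d)}=2e^{-(9/4)d^{1/5}}$, so each pair fails with probability at most $2e^{-(9/4)d^{1/5}}+2e^{-\sqrt d}$. (Equivalently, one can bound $\langle\xi_i,\xi_j\rangle=\sum_{k=1}^{d}\xi_{i,k}\xi_{j,k}$ directly as a centered sub-exponential sum via Bernstein's inequality, which yields a tail of the same order.)

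Finally, a union bound over the $\le 2n$ linear-functional events, the $\le n$ norm events, and the $\le n^2$ pairs, together with the hypothesis $n\le\poly(d)$, gives total failure probability at most $\poly(d)\cdot\big(e^{-d^{1/5}/2}+e^{-\sqrt d}+e^{-(9/4)d^{1/5}}\big)\le\poly(d)\cdot e^{-d^{1/5}/2}\le e^{-d^{1/10}}$ for $d$ large enough, which is the claim. I do not expect a serious obstacle here; the only points requiring care are that the cross-term variable $\langle\xi_i,\xi_j\rangle$ is sub-exponential rather than Gaussian, so one must either condition on the magnitude of $\|\xi_j\|_2^2$ or appeal to a Bernstein-type inequality, and that the exponents must be tracked so that the $d^{1/5}$-scale decay survives the polynomial inflation from the union bound while still beating the $d^{1/10}$ target.
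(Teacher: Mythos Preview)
Your proposal is correct and follows essentially the same approach as the paper: Gaussian tail bounds for the linear functionals, a $\chi_d^2$ (Laurent--Massart) bound for the norms, a conditioning argument on one vector to reduce the cross term to a Gaussian tail, and a union bound absorbed by the $e^{-\Theta(d^{1/5})}$ decay against the $\poly(d)$ count. The only cosmetic differences are that the paper writes the cross-term conditioning via the unit vector $\xi/\|\xi\|$ (so the projection is exactly standard normal) and uses the slightly sharper Mill's-ratio form of the Gaussian tail, but the structure and exponents match yours.
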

\begin{proof}[Proof of Lemma~\ref{lemma:helper_properties}]
	Let $\xi, \xi'\sim\mathcal{N}(0, I)$ be two independent random variables. 
	By the tail bound of normal distribution, we have 
	\begin{align}
\Pr\left(|\langle\xi, e_1\rangle|\ge d^{\frac{1}{10}}\right) \le {d^{-\frac{1}{10}}}\cdot {e^{-\frac{d^{\frac{1}{5}}}{2}}}.
	\end{align} By the tail bound of $\chi_d^2$ distribution, we have 	\begin{align}\Pr\left(|\|\xi\|_2^2-d|\ge 4d^{\frac{3}{4}}\right) \le 2e^{-\sqrt{d}}.\end{align} 
	Since the directions of $\xi$ and $\xi'$ are independent, we can bound their correlation with the norm of $\xi$ times the projection of $\xi'$ onto $\xi$:
	\begin{align}
	\Pr\left(|\langle \xi, \xi'\rangle|\ge 3d^{\frac{3}{5}}\right) \le \Pr\left(\|\xi\|_2\ge \sqrt{d}+2d^{\frac{3}{8}}\right) + \Pr\left(|\langle \xi',  \frac{\xi}{\|\xi\|}\rangle| \ge d^\frac{1}{10}\right) \le \frac{e^{-\frac{d^{\frac{1}{5}}}{2}}}{d^{\frac{1}{10}}} + 2e^{-\sqrt{d}}.
	\end{align}
	
	Since every $\xi_i$ and $\xi_j$ are independent when $i\ne j$, by the union bound, we know that with probability at least $1-(n^2+2n)( \frac{e^{-\frac{d^{\frac{1}{5}}}{2}}}{d^{\frac{1}{10}}} + 2e^{-\sqrt{d}})$, we have $|\langle \xi_i, e_1 \rangle|\le d^{\frac{1}{10}}$, $|\langle \xi_i, e_2 \rangle|\le d^{\frac{1}{10}}$ and $|\|\xi_i\|_2^2-d|\le 4d^{\frac{3}{4}}$ for all $i\in[n]$, and also $|\langle \xi_i, \xi_j\rangle|\le 3d^{\frac{3}{5}}$ for all $i\ne j$. Since the error probability is exponential in $d$, for large enough $d$, the error probability is smaller than $e^{-d^{\frac{1}{10}}}$, which finishes the proof.
\end{proof}

Using the above lemma, we can prove the following lemma which constructs a linear classifier of the empirical dataset with relatively large margin and small norm.

\begin{lemma}\label{lemma:constructed_classifier}
	In the setting of Theorem~\ref{thm:toy_example}, let $w_1^* = e_1$, $w_2^* = -e_1$, $w_3^* = \frac{1}{\rho d}\sum_{i=1}^{n_3} \xi_i^{(3)}$. Apply Lemma~\ref{lemma:helper_properties} to the set of all $\xi_i^{(k)}$ where $k\in[3]$ and $i\in[n_k]$. When the high probability outcome of Lemma~\ref{lemma:helper_properties} happens, the margin of classifier $\{w_1^*, w_2^*, w_3^*\}$ is at least $1-O(d^{-\frac{1}{10}})$. Furthermore, we have $\|w_3^*\|_2^2 \le O(d^{-\frac{1}{5}})$.
\end{lemma}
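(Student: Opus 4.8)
The plan is to condition on the high-probability event of Lemma~\ref{lemma:helper_properties}, applied to the whole collection $\{\xi_i^{(k)}: k\in[3],\, i\in[n_k]\}$ (legitimate, since the total count is $\poly(d)$), and then verify both conclusions by direct computation, tracking only the powers of $d$. Throughout I would use the parameter choices $\rho=d^{-1/5}$, $\tau=d^{1/5}$ (so $\rho\tau=1$ and $\rho d^{1/10}=d^{-1/10}$) and $n_3\le d^{1/5}$, and interpret ``margin'' as $w_y^{*\top}x - w_{y'}^{*\top}x$ over all data $(x,y)$ and $y'\ne y$.

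First I would bound $\|w_3^*\|_2^2$. Writing $\|w_3^*\|_2^2 = \frac{1}{\rho^2 d^2}\big(\sum_i\|\xi_i^{(3)}\|_2^2 + \sum_{i\ne j}\langle\xi_i^{(3)},\xi_j^{(3)}\rangle\big)$ and inserting $\|\xi_i^{(3)}\|_2^2\le d+4d^{3/4}$ and $|\langle\xi_i^{(3)},\xi_j^{(3)}\rangle|\le 3d^{3/5}$ from Lemma~\ref{lemma:helper_properties}, the leading contribution is $\frac{n_3 d}{\rho^2 d^2}=O(d^{-2/5})$ and the remaining terms are of lower order, so $\|w_3^*\|_2^2 = O(d^{-2/5}) = O(d^{-1/5})$.

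Next I would check the margin one data point at a time. On the frequent classes, $w_1^*=e_1$ and $w_2^*=-e_1$ give $w_1^{*\top}x^{(1)}_i = 1+\rho\langle e_1,\xi_i^{(1)}\rangle$ and $w_2^{*\top}x^{(1)}_i = -1-\rho\langle e_1,\xi_i^{(1)}\rangle$ (using $e_1\perp e_2$), so the margin of frequent data against the other frequent class is $\ge 2-2d^{-1/10}\ge 1$; it then suffices to show $w_3^*$ evaluated on frequent points is small. Decomposing $w_3^{*\top}x^{(1)}_i$ into its $e_1$, $-q_i^{(1)}\tau e_2$, and $\rho\xi_i^{(1)}$ contributions and bounding each via Lemma~\ref{lemma:helper_properties}, every term is $O(d^{-1/5})$ (the $\tau e_2$ term being the largest at $O(d^{-3/10})$), so the margin of frequent data against class $3$ is $1-O(d^{-1/10})$. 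On the rare class, $w_3^{*\top}x^{(3)}_i$ is dominated by the diagonal self-inner-product $\frac{1}{d}\|\xi_i^{(3)}\|_2^2\ge 1-4d^{-1/4}$ with all other terms $O(d^{-1/5})$, while $|w_1^{*\top}x^{(3)}_i|=|w_2^{*\top}x^{(3)}_i|=\rho|\langle e_1,\xi_i^{(3)}\rangle|\le d^{-1/10}$; hence the margin of rare data against classes $1,2$ is also $1-O(d^{-1/10})$. Taking the minimum over all data points and competing labels gives the claimed margin.

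The only step needing care is the bound $|w_3^{*\top}x^{(1)}_i|=O(d^{-1/5})$ in the frequent-vs-rare comparison: a priori the large $\tau e_2$ shift of the frequent data could correlate strongly with $w_3^*$. What saves it is that $w_3^*$ is a $\frac{1}{\rho d}$-scaled sum of only $n_3\le d^{1/5}$ Gaussians, each nearly orthogonal to $e_2$ (correlation $\le d^{1/10}$), together with $\rho\tau=1$, so this correlation is only $\frac{\tau n_3 d^{1/10}}{\rho d}=O(d^{-3/10})$, comfortably inside the $O(d^{-1/10})$ slack. Everything else is routine exponent arithmetic.
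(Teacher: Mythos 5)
Your proposal is correct and follows essentially the same route as the paper: condition on Lemma~\ref{lemma:helper_properties}, verify the margin datapoint-by-datapoint for frequent-vs-frequent, frequent-vs-rare, and rare-vs-frequent comparisons, and bound $\|w_3^*\|_2^2$ directly (your expansion with cross terms even gives the slightly sharper $O(d^{-2/5})$ versus the paper's triangle-inequality bound $O(d^{-1/5})$). One cosmetic slip: in bounding $w_3^{*\top}x_i^{(1)}$ the dominant contribution is the Gaussian--Gaussian cross term $\frac{3n_3 d^{3/5}}{d}=O(d^{-1/5})$, not the $\tau e_2$ term at $O(d^{-3/10})$, but since all terms are $O(d^{-1/5})$ and well within the $O(d^{-1/10})$ slack, the argument is unaffected.
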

\begin{proof}[Proof of Lemma~\ref{lemma:constructed_classifier}]
	
	When the high probability outcome of Lemma~\ref{lemma:helper_properties} happens, we give a lower bound on the margin for all data in the dataset. For data $x=x^{(1)}_i$ in class 1, we have \begin{align}
	{w_1^*}^\top x = 1 +\langle \xi_i^{(1)}, e_1\rangle\rho \ge 1-\rho d^{\frac{1}{10}},
	\end{align} 
	\begin{align}
	{w_2^*}^\top x = -1 + \langle \xi_i^{(1)}, e_1\rangle\rho \le -1+\rho d^{\frac{1}{10}},\end{align}\begin{align}
	{w_3^*}^\top x = \frac{1}{\rho d} \left(e_1 - q_i^{(1)} \tau e_2 + \rho \xi_i^{(1)}\right)^\top \left(\sum_{j=1}^{n_3} \xi_j^{(3)}\right) \le \frac{n_3(\tau +1)}{\rho d} d^{\frac{1}{10}} + \frac{3n_3}{d} d^{\frac{3}{5}}.
	\end{align} So the margin on data $(x_i^{(1)}, 1)$ is \begin{align}
	{w_1^*}^\top x - {w_3^*}^\top x \ge 1-\rho d^{\frac{1}{10}} - \frac{n_3(\tau +1)}{\rho d} d^{\frac{1}{10}} - \frac{3n_3}{d} d^{\frac{3}{5}} \ge 1-O(d^{-\frac{1}{10}}).
	\end{align} Similarly, for data $x^{(2)}_i$ in class 2, the margin is at least $1-O(d^{-\frac{1}{10}})$. 
	
	For data $x=x^{(3)}_i$ in class 3, we have \begin{align}
	{w_3^*}^\top x = \frac{1}{\rho d}\left(\sum_{j=1}^{n_3} \xi_j^{(3)}\right)^\top \left(e_2 + \rho \xi_i^{(3)}\right) \ge \frac{1}{d}\|\xi_i^{(3)}\|_2^2 - \frac{3n_3}{d} d^{\frac{3}{5}} - \frac{n_3 d^{\frac{1}{10}}}{\rho d} \ge 1-O(d^{-\frac{1}{5}}).
	\end{align} On the other hand, \begin{align}
	{w_1^*}^\top x = \langle \rho \xi_i^{(3)}, e_1\rangle \le \rho d^{\frac{1}{10}},
	\end{align} \begin{align}
	{w_2^*}^\top x = \langle \rho \xi_i^{(3)}, -e_1\rangle \le \rho d^{\frac{1}{10}}.
	\end{align} So the margin is  \begin{align}{w_3^*}^\top x - \max \{{w_1^*}^\top x, {w_2^*}^\top x\} \ge 
	{w_3^*}^\top x - \rho d^{\frac{1}{10}} \ge 1-O(d^{-\frac{1}{10}}).
	\end{align}
	
	Finally, noticing that $\|w_3^*\|_2 \le \frac{2n_3\sqrt{d}}{\rho d} \le 2d^{-\frac{1}{10}}$ finishes the proof.	
\end{proof}

We also introduce the following helper lemma:
\begin{lemma}\label{lemma:helper_lemma}
    Let $W\in\Real^{3\times d}$ be an arbitrary matrix, $m\ge 3$. Then, we have \begin{align}\|W\|_F^2 = \frac{1}{2}\min_{W_2W_1=W} \left(\|W_2^\top W_2\|_F^2 + \|W_1W_1^\top\|_F^2\right),\end{align} where $W_1\in\Real^{m\times d}$ and $W_2\in\Real^{3\times m}$. Furthermore, the minimum is achieved when $W_1W_1^\top = W_2^\top W_2$.
\end{lemma}
\begin{proof}
    On one hand, we have
    \begin{align}
        \|W\|_F^2 &= Tr(WW^\top)\\
        &= \min_{W_2W_1=W} Tr(W_2W_1W_1^\top W_2^\top)\\
        &= \min_{W_2W_1=W} Tr(W_1W_1^\top W_2^\top W_2)\\
        &\le \frac{1}{2} \min_{W_2W_1=W} (\|W_1W_1^\top\|_F^2 + \|W_2^\top W_2\|_F^2),\label{eq:helper_ineq}
    \end{align}
    where the inequality becomes equality if and only if $W_1W_1^\top = W_2^\top W_2$.
    
    On the other hand, let $W=U\Sigma V$ be the SVD decomposition of $W$, where $\Sigma\in\Real^{3\times d}$ is a diagonal matrix with $\sigma_1, \sigma_2, \sigma_3$ on its diagonal. For integers $p, q\ge3$, we use $\Sigma^{\frac{1}{2}}_{p\times q}$ to denote the ${p\times q}$ matrix with $\sqrt{\sigma_1}, \sqrt{\sigma_2}, \sqrt{\sigma_3}$ at its first 3 diagonal positions and 0 otherwise. If we set $W_1 = \Sigma^{\frac{1}{2}}_{m\times d} V$ and $W_2 = U \Sigma^{\frac{1}{2}}_{3\times m}$, then it can be verified that $W=W_2W_1$ and $\|W\|_F^2 =  \frac{1}{2} (\|W_1W_1^\top\|_F^2 + \|W_2^\top W_2\|_F^2)$. Therefore, the equality holds in Equation~\ref{eq:helper_ineq}, which finishes the proof.
\end{proof}

Now we are ready to prove the supervised learning part of Theorem~\ref{thm:toy_example}:

\begin{proof}[Proof of Theorem~\ref{thm:toy_example} (supervised learning part)]
    Let $\{\hat{w}_1, \hat{w}_2, \hat{w}_3\}$ be three vectors in $\Real^d$ that minimize $\|w_1\|_2^2+\|w_2\|_2^2 + \|w_3\|_2^2$ subject to the margin constraint $w_y^\top x \ge w_{y'}^\top x + 1$ for all empirical data $(x, y)$ and $y'\ne y$. To prove the supervised learning part of Theorem~\ref{thm:toy_example}, we will first prove that $\langle \hat{w}_1, e_1\rangle^2 + \langle \hat{w}_2, e_1\rangle^2 + \langle \hat{w}_3, e_1\rangle^2 \le O(d^{-\frac{1}{10}})$ with high probability, and then use this result to prove the correlation between $e_2$ and $W_{\textup{SL}}$.

	We frist apply Lemma~\ref{lemma:helper_properties} to the set of all $\xi_i^{(k)}$ where $k\in[3]$ and $i\in[n_k]$. We consider the situation when the high probability outcome of Lemma~\ref{lemma:helper_properties} holds (which happens with probability at least $1-e^{-d^{\frac{1}{10}}}$). By Lemma~\ref{lemma:constructed_classifier}, the constructed classifier $\{w_1^*, w_2^*, w_3^*\}$ has margin $\alpha\ge1-O(d^{-\frac{1}{10}})$ in this case. As a result, $\{\frac{1}{\alpha}w_1^*, \frac{1}{\alpha}w_2^*, \frac{1}{\alpha}w_3^*\}$ is a classifier with margin $1$ and norm bounded by \begin{align}
	\|\frac{1}{\alpha}w_1^*\|_2^2 + \|\frac{1}{\alpha}w_2^*\|_2^2  + \|\frac{1}{\alpha}w_3^*\|_2^2  = \frac{2+\|w_3^*\|_2^2}{\alpha^2} \le 2+O(d^{-\frac{1}{10}}).
	\end{align}
	
	Let $\{\hat{w}_1, \hat{w}_2, \hat{w}_3\}$ be min-norm linear classifier of the empirical dataset. Since its norm cannot be larger than the constructed one, we have $\|\hat{w}_1\|_2^2 + \|\hat{w}_2\|_2^2  + \|\hat{w}_3\|_2^2  \le 2+O(d^{-\frac{1}{10}})$. By standard concentration inequality, when $n_1\ge \poly(d)$, with probability at least $1-e^{d^{-\frac{1}{10}}}$, we have 
	\begin{align}
	\left\vert\Exp_{i\in[n_1], q_i^{(1)} = 0} [x_i^{(1)}] - e_1\right\vert\le d^{-\frac{1}{10}},
	\end{align}
	where the expectation is over all the data from class $1$ that satisfies $q_i^{(1)}=0$. By the definition of $\{\hat{w}_1, \hat{w}_2, \hat{w}_3\}$ we know $(\hat{w}_1 - \hat{w}_3)^\top x_i^{(1)}\ge 1$ for all $i\in[n_1]$, hence averaging over all the class $1$ data with $q_i^{(1)}=0$ and using the above inequality gives us \begin{align}
	(\hat{w}_1 - \hat{w}_3)^\top e_1 \ge 1-\|\hat{w}_1 - \hat{w}_3\|_2\cdot  d^{-\frac{1}{10}} \ge 1-O(d^{-\frac{1}{10}}).
	\end{align} A similar analysis for class $2$ data gives us \begin{align}
	(\hat{w}_2 - \hat{w}_3)^\top (-e_1) \ge 1-O(d^{-\frac{1}{10}}).
	\end{align}
	
	Now we prove that $\hat{w}_1, \hat{w}_2, \hat{w}_3$ all have small correlation with $e_2$. Without loss of generality, we assume $\hat{w}_3^\top e_1 \triangleq t \ge 0$. If $t\ge \frac{1}{2}$, we have \begin{align}
	\langle \hat{w}_1, e_1\rangle^2 + \langle \hat{w}_2, e_1\rangle^2 + \langle \hat{w}_3, e_1\rangle^2 \ge \left(t+1-O(d^{-\frac{1}{10}})\right)^2 > 2.25 -O(d^{-\frac{1}{10}}),
	\end{align} which contradicts with $\|\hat{w}_1\|_2^2 + \|\hat{w}_2\|_2^2  + \|\hat{w}_3\|_2^2  \le 2+O(d^{-\frac{1}{10}})$. Therefore, there must be $t\le \frac{1}{2}$, hence \begin{align}
	&\langle \hat{w}_1, e_1\rangle^2 + \langle \hat{w}_2, e_1\rangle^2 + \langle \hat{w}_3, e_1\rangle^2 \\&\ge \left(1+t-O(d^{-\frac{1}{10}})\right)^2 + \left(1-t-O(d^{-\frac{1}{10}})\right)^2 + t^2 \\
	&\ge 2+3t^2 -O(d^{-\frac{1}{10}}) \\ &\ge 2-O(d^{-\frac{1}{10}}).
	\end{align} As a result, \begin{align}
	&\langle \hat{w}_1, e_2\rangle^2 + \langle \hat{w}_2, e_2\rangle^2 + \langle \hat{w}_3, e_2\rangle^2 \\&\le \|\hat{w}_1\|_2^2 + \|\hat{w}_2\|_2^2 + \|\hat{w}_3\|_2^2 -  \langle \hat{w}_1, e_1\rangle^2 - \langle \hat{w}_2, e_1\rangle^2 - \langle \hat{w}_3, e_1\rangle^2 
	\\&\le
	\left(2+O(d^{-\frac{1}{10}})\right) - \left(2-O(d^{-\frac{1}{10}})\right) \\&\le O(d^{-\frac{1}{10}}).
	\end{align}
	
	Now we turn to the analysis of $W_{\textup{SL}}$. Recall that we learn two matrices $W_1\in\Real^{m\times d}$ and $W_2\in\Real^{3\times m}$ that minimize $\|W_1^\top W_1\|_F^2 + \|W_2^\top W_2\|_F^2$ subject to the margin constraint $(W_2W_1x)_y \ge (W_2W_1x)_{y'} + 1$, and the supervised representation is $W_{\textup{SL}} = W_1$. According to Lemma~\ref{lemma:helper_lemma}, we know that the solution $W_1$ and $W_2$ satisfy 
	$
	    W_2W_1 = \left[\hat{w}_1, \hat{w}_2, \hat{w}_3\right]^\top
	$
	and 
	$
	    W_2^\top W_2=W_1 W_1^\top
	$.
	Let $W_2^\top W_2 = W_1W_1^\top = U^\top \Sigma U$ be the SVD decomposition, where $\Sigma\in\Real^{m\times m}$ is a non-negative diagonal matrix and $U$ is a unitary matrix. Since $W_2$ has rank at most $3$, there are at most $3$ entries in $\Sigma$ that are non-zero. Without loss of generality, we assume that all the non-zero entries of $\Sigma$ are in the first 3 rows. 
	
	Let $\Sigma_{m\times d}$ and $\Sigma_{3\times m}$ be the matrices by reshaping $\Sigma$ (deleting or padding all-0 rows/columns) to the corresponding dimensions. We can write $W_1$ as $W_1 = U^\top \Sigma_{m\times d}^{\frac{1}{2}} V_1 $ for some unitary matrix $V_1 \in \Real^{d\times d}$, where $\Sigma_{m\times d}^{\frac{1}{2}}$ is the element-wise square root of $\Sigma_{m\times d}$. Similarly, $W_2 = V_2 \Sigma_{3\times m}^{\frac{1}{2}} U$ for some unitary matrix $V_2 \in \Real^{3\times 3}$. Taking the product gives $W_2W_1 = V_2 \Sigma_{3\times d} V_1$.
	
	Let $W_{\textup{SL}} = W_1 = [w_1, w_2, \cdots, w_m]^\top$. Now we finishe the proof with 
	\begin{align}
	    \sum_{i=1}^m \langle w_i, e_2\rangle^2 &= \|W_1 e_2\|_2^2 = \|U^\top \Sigma_{m\times d}^{\frac{1}{2}} V_1 e_2\|_2^2 \le \|V_1e_2\|^2_2 \cdot \|\Sigma_{d\times d} V_1 e_2\|^2_2 = \|V_2 \Sigma_{3\times d} V_1 e_2\|^2_2\\
	    &= \|W_2W_1e_2\|_2^2 = \langle \hat{w}_1, e_2\rangle^2 + \langle \hat{w}_2, e_2\rangle^2 + \langle \hat{w}_3, e_2\rangle^2 \le O(d^{-\frac{1}{10}}).
	\end{align}
\end{proof}

To prove the self-supervised learning part of Theorem~\ref{thm:toy_example}, we first introduce the following lemma which gives some helpful properties of the empirical data matrix.

\begin{lemma}\label{lemma:data_matrix_properties}
	In the setting of Theorem~\ref{thm:toy_example}, let $M\triangleq \Exp_x[xx^\top]$ where the expectation is over empirical data. Then, when $n_1, n_2\ge \poly(d)$, with probability at least $1-e^{-d^{\frac{1}{10}}}$, we have: (1) $e_2^\top M e_2 \ge \Omega(d^{\frac{2}{5}})$,  and (2) $u^\top M u \le O(1)$ for all $u\in\Real^d$ such that $u^\top e_2 = 0$ and $\|u\|_2=1$. 
\end{lemma}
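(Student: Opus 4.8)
The plan is to directly compute the empirical second-moment matrix $M = \hat{\Exp}_x[xx^\top]$ as a weighted average of the three class-conditional second moments and then estimate the two quadratic forms. First I would write $M = \frac{n_1}{n} M_1 + \frac{n_2}{n} M_2 + \frac{n_3}{n} M_3$ where $n = n_1+n_2+n_3$ and $M_k = \frac{1}{n_k}\sum_i x_i^{(k)} (x_i^{(k)})^\top$ is the empirical second moment of class $k$. For classes $1$ and $2$, each sample is $\pm e_1 - q\tau e_2 + \rho\xi$, so $x x^\top$ expands into nine terms; the dominant piece is the $q^2\tau^2 e_2 e_2^\top$ contribution, and since $q\in\{0,1\}$ is uniform, concentration (invoking Lemma~\ref{lemma:helper_properties} for the $\xi$-dependent cross terms and a Chernoff bound for the fraction of samples with $q_i^{(k)}=1$, valid since $n_1,n_2\ge\poly(d)$) gives $e_2^\top M_k e_2 \ge (\tfrac{1}{2}-o(1))\tau^2$ for $k=1,2$. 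Plugging $\tau = d^{1/5}$ and noting $n_1/n,n_2/n = \Theta(1)$ (because $n_3 \le d^{1/5}$ is negligible) yields $e_2^\top M e_2 \ge \Omega(\tau^2) = \Omega(d^{2/5})$, which is part (1).

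For part (2), fix a unit $u \perp e_2$. Since $u^\top e_2 = 0$, for class-$1$ data $u^\top x_i^{(1)} = u^\top e_1 + \rho\, u^\top \xi_i^{(1)}$, so the $\tau e_2$ term drops out entirely — this is the key point. Thus $u^\top M_1 u = \frac{1}{n_1}\sum_i (u^\top e_1 + \rho u^\top\xi_i^{(1)})^2 \le 2(u^\top e_1)^2 + 2\rho^2 \cdot \frac{1}{n_1}\sum_i (u^\top\xi_i^{(1)})^2$. The first term is at most $2$; for the second, I would use a concentration bound for $\frac{1}{n_1}\sum_i (u^\top \xi_i^{(1)})^2$, which concentrates around $\|u\|_2^2 = 1$ — but since this must hold \emph{uniformly} over all unit $u\perp e_2$, I would instead bound $\rho^2 \lambda_{\max}\big(\frac{1}{n_1}\sum_i \xi_i^{(1)}(\xi_i^{(1)})^\top\big)$, and with $n_1 = \poly(d) \gg d$ a standard random-matrix bound gives this operator norm is $O(1)$ (in fact $1+o(1)$) with probability $1-e^{-\Omega(d)}$; multiplied by $\rho^2 = d^{-2/5} = o(1)$ this term is $o(1)$. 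The same applies to $M_2$, and for $M_3$, $u^\top x_i^{(3)} = \rho u^\top\xi_i^{(3)}$ so $u^\top M_3 u \le \rho^2 \cdot O(1) = o(1)$ as well (and $n_3/n \le 1$ anyway). Summing the three weighted contributions gives $u^\top M u \le O(1)$, uniformly over the relevant $u$.

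The main obstacle is the \emph{uniformity} in part (2): a pointwise concentration bound on $u^\top M u$ for each fixed $u$ does not suffice, and a naive $\epsilon$-net over the unit sphere in $\mathbb{R}^{d-1}$ costs a $e^{O(d)}$ union-bound factor that must be beaten by the failure probability of the per-direction event. The clean way around this is to phase the argument entirely through the operator norm of the empirical covariance of the Gaussian noise restricted to $e_2^\perp$ (equivalently, of the full $d\times d$ empirical second moment $\frac{1}{n_k}\sum_i \xi_i^{(k)}(\xi_i^{(k)})^\top$), for which matrix concentration gives $\|\cdot\|_{\mathrm{op}} \le 1 + O(\sqrt{d/n_k}) = 1+o(1)$ with probability $1 - e^{-\Omega(d)}$ when $n_k \ge \poly(d)$; everything else (the rank-one $e_1 e_1^\top$ pieces, the $e_2$ pieces which are killed by $u\perp e_2$) is handled deterministically. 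I would take care to state which concentration inequality is used — either a direct covering argument with the improved exponent, or a black-box bound such as the ones in standard non-asymptotic random matrix theory — and to double-check that the error probability, after union-bounding over the $O(1)$ events used, is still $1 - e^{-d^{1/10}}$ as claimed, which is comfortably satisfied since all the above failure probabilities are $e^{-\Omega(\sqrt{d})}$ or smaller.
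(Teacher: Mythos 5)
Your overall strategy is essentially the paper's: for part (2) you use $u\perp e_2$ to kill the $\pm e_1$-irrelevant $\tau e_2$ and $e_2$ components deterministically and push all randomness into the empirical covariance of the Gaussian noise, and for part (1) you lower-bound the $e_2$-energy through the $\tau$-terms of the frequent-class samples. The tools differ slightly: the paper controls the pooled noise matrix $M'=\frac{1}{n}\sum_i \xi_i\xi_i^\top$ via entrywise bounds ($|M'_{ij}|\le 1/d$ off-diagonal, $|M'_{ii}-1|\le 1/d$) plus a Frobenius-norm estimate of $M'-I$, whereas you invoke an operator-norm bound $\lambda_{\max}\bigl(\frac{1}{n_k}\sum_i\xi_i\xi_i^\top\bigr)\le 1+O(\sqrt{d/n_k})$; your route handles the uniformity over $u$ in the standard, cleaner way and needs a milder polynomial lower bound on $n_1,n_2$ than the entrywise argument. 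For part (1) the paper uses Jensen on the empirical mean, $e_2^\top M e_2\ge(\hat{\Exp}[e_2^\top x])^2\ge\Omega(\tau^2)$, while you sum $(e_2^\top x_i)^2\ge q_i\tau^2-2\tau\rho\,|e_2^\top\xi_i|$ directly over the $q_i=1$ samples; both work, though note that $n_1/n$ and $n_2/n$ need not both be $\Theta(1)$ if $n_1\ll n_2$ --- it suffices that the larger frequent class carries weight $\Theta(1)$. One intermediate claim is wrong and needs a one-line repair: you assert $u^\top M_3 u\le\rho^2\cdot O(1)$ uniformly over unit $u\perp e_2$, but since $n_3\le d^{1/5}$ we have $\lambda_{\max}\bigl(\frac{1}{n_3}\sum_i\xi_i^{(3)}(\xi_i^{(3)})^\top\bigr)=\Theta(d/n_3)$, so this quantity can be as large as $\rho^2 d/n_3$ (up to $d^{3/5}$ when $n_3=1$), and the remark that $n_3/n\le 1$ does not rescue it. What actually saves the bound is that class 3 enters $M$ with weight $n_3/n\le d^{1/5}/\poly(d)$: its contribution to $u^\top M u$ is at most $\frac{\rho^2}{n}\sum_i\|\xi_i^{(3)}\|_2^2=O(d^{4/5}/n)=o(1)$; equivalently, pool all $n$ noise vectors into a single empirical covariance as the paper does. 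With that fix, your failure-probability bookkeeping goes through and the lemma is proved.
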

\begin{proof}[Proof of Lemma~\ref{lemma:data_matrix_properties}]
	Let $n=n_1+n_2+n_3$. We abuse notation and let $\xi_i$ ($i\in[n]$) be the set of all $\xi_i^{(k)}$ that appears in the empirical data. Let matrix $M' = \frac{1}{n}\sum_{i=1}^n \xi_i \xi_i^\top$. By standard concentration inequalities and union bound, for $n\ge \poly(d)$, with probability at least $1-\frac{1}{2}e^{-d^{\frac{1}{10}}}$, we have that $|M'_{i, j}|\le \frac{1}{d}$ for all $i\ne j$ and $|M'_{i, i}-1|\le\frac{1}{d}$ for all $i\in [d]$. In this case, for any vector $u\in\Real^d$ such that $\|u\|_2=1$ and $u^\top e_2=0$, we have \begin{align}
	u^\top M u \le 2\|u\|_2^2 + 2u^\top (\rho^2M') u \le 2+2\rho^2 + 2\rho^2\|M'-I\|_F \le O(1).
	\end{align}
	
	On the other hand, by the definition of data distirbution and standard concentration inequalities, for $n\ge\poly(d)$,  with probability at least $1-\frac{1}{2}e^{-d^{\frac{1}{10}}}$ we have that: at least $\frac{1}{3}$ of all data either is class 1 with $q_i^{(1)}=1$ or class 2 with $q_i^{(2)}=1$, and $\|\frac{1}{n}\sum_{i=1}^n \xi_i\|_2\le O(\frac{1}{d})$. In this case, \begin{align}
	e_2^\top M e_2 = \Exp_{x}[(e_2^\top x)^2] \ge (\Exp_x[e_2^\top x])^2 \ge \left(\frac{1}{3}\tau - e_2^\top\left(\frac{1}{n}\sum_{i=1}^n \xi_i\right)\right)^2\ge\Omega(\tau^2) = \Omega(d^{\frac{2}{5}}).
	\end{align}
\end{proof}

Using the above lemma, we can prove the self-supervised learning part of Theorem~\ref{thm:toy_example}.
\begin{proof}[Proof of Theorem\ref{thm:toy_example}(self-supervised learning part)]
	Let $M=\Exp_{x}[xx^\top]$ be the empirical data matrix, where the expectation is over the dataset. Notice that self-supervised learning objective has the same minimizer as the matrix factorization objective $\|M-W^\top W\|_F^2$, by Eckart–Young–Mirsky theorem we know that the span of $\tilde{w}_1, \tilde{w}_2, \cdots, \tilde{w}_m$ is exactly the span of the top $m$ eigenvectors of matrix $M$. Let $M=\sum_{i=1}^d \lambda_i v_iv_i^\top$ where $\lambda_i$ is the $i$-th largest eigenvalue of $M$ with the corresponding eigenvector $v_i$. We decompose $e_2$ in the eigenvector basis as $e_2 = \sum_{i=1}^d \zeta_i v_i$. 
	
	We first note that $\lambda_1 \ge \Omega(d^\frac{2}{5})$ and $\max_{i\neq 1}\lambda_i\le O(1)$. Indeed, we know that $\mathbb{E}[M] = \textup{diag}(1 + d^{-\frac{2}{5}}, d^{\frac{2}{5}}+d^{-\frac{2}{5}},d^{-\frac{2}{5}},\cdots,d^{-\frac{2}{5}})$. By standard matrix concentration bounds (e.g. Theorem 4.6.1 of \citet{vershynin2018high}), we know that with probability at least $1-e^{-d^\frac{1}{10}}$, $\|M - \mathbb{E}[M]\| \le O(d^{-\frac{2}{5}})$. By Weyl’s inequality we know that $\max_i|\lambda_i(T) - \lambda_i(S)|\le\|S-T\|$, so $\lambda_1 \ge \Omega(d^\frac{2}{5})$ and $\max_{i\neq 1}\lambda_i\le O(1)$.
	
	By Lemma~\ref{lemma:data_matrix_properties}, we know that with probability at least $1-e^{-d^\frac{1}{10}}$, we have $e_2^\top M e_2\ge \Omega(d^\frac{2}{5})$ and $\frac{u^\top M u}{\|u\|_2^2}\le O(1)$ for all $u$ orthogonal to $e_2$. To prove the result regarding self-supervised learning, we only need to prove that $\zeta_1^2\ge1-O(d^{-\frac{1}{5}})$ in this case. 
	
	We first show that $\zeta_1^2 \ge \frac{1}{2}$. For contradiction, first assume $\zeta_1^2\le \frac{1}{2}$. Define vector \begin{align}
	u = \sqrt{1 - \zeta_1^2} v_1 - \frac{\zeta_1\sum_{i=2}^d \zeta_i v_i}{\sqrt{1-\zeta_1^2}}.
	\end{align} which satisfies $u^\top e_2 = 0$ and $\|u\|_2^2 = 1$. Notice that 
	\begin{align}
	\frac{u^\top M u}{\|u\|_2^2} &= (1 - \zeta_1^2)\lambda_1 - \frac{\zeta_1^2\sum_{i=2}^d\zeta_i^2\lambda_i}{1 - \zeta_1^2}\\
	&\ge \frac{\lambda_1}{2} - \max_{i\neq 1}\lambda_i\\
	&\ge \Omega(d^\frac{2}{5}),
	\end{align} 
	which contradicts to $\frac{u^\top M u}{\|u\|_2^2} \le O(1)$. Therefore, we have $\zeta_1^2 \ge \frac{1}{2}$.
	
	To prove that $\zeta_1^2$ is close to $1$, we let scalar $t = \frac{1}{\zeta_1^2}-1$ and define vector \begin{align}
	u = -t\zeta_1v_1 + \sum_{i=2}^d \zeta_i v_i,
	\end{align} which satisfies $u^\top e_2 = 0$. Since $\zeta_1^2\ge \frac{1}{2}$, we have $t\le 1$ and $\|u\|_2^2\le 1$. As a result, we have \begin{align}
	\frac{u^\top M u}{\|u\|_2^2} \ge t^2 \lambda_1 \zeta_1^2 + \sum_{i=2}^d \lambda_i \zeta_i^2 \ge t^2 e_2^\top M e_2 \ge \Omega(t^2 d^{\frac{2}{5}}).
	\end{align} On the other hand, we know that $\frac{u^\top M u}{\|u\|_2^2}\le O(1)$. Comparing these two bounds gives us $t^2 \le O(d^{-\frac{1}{5}})$, which means $\zeta_1^2 \ge 1-O(d^{-\frac{1}{5}})$. 
\end{proof}

\end{document}